\documentclass[twoside,11pt]{article}

\usepackage[preprint]{dmlr2e}            

\usepackage{amsmath,amssymb,amsthm}
\usepackage{booktabs}
\usepackage{graphicx}
\usepackage{hyperref}
\usepackage{xcolor}
\usepackage{multirow}
\usepackage{array}
\usepackage{enumitem}
\usepackage[protrusion=false,expansion=false]{microtype}
\usepackage{caption}
\usepackage{subcaption}

\hypersetup{colorlinks=true, linkcolor=blue, citecolor=blue, urlcolor=blue}

\graphicspath{{figures/}}

\newtheorem{proposition}{Proposition}
\newtheorem{remark}{Remark}

\title{Synthetic Tabular Generators Fail to Preserve Behavioral Fraud Patterns:
A Benchmark on Temporal, Velocity, and Multi-Account Signals}

\author{%
  \name Bhavana Sajja \email sajja.bhavana@gmail.com \\
  \addr Independent Researcher
}

\ShortHeadings{Synthetic Tabular Generators Fail to Preserve Behavioral Fraud Patterns}{Sajja}

\date{}

\begin{document}

\maketitle

\begin{abstract}
We introduce \emph{behavioral fidelity}---a third evaluation dimension for synthetic tabular
data that measures whether generated data preserves the temporal, sequential, and structural
behavioral patterns that distinguish real-world entity activity.
Existing frameworks evaluate \emph{statistical fidelity} (marginal distributions and
correlations) and \emph{downstream utility} (classifier AUROC on synthetic-trained models),
but neither tests for the behavioral signals that operational detection and analysis systems
actually rely on.

We formalize a taxonomy of four behavioral fraud patterns (P1--P4) covering inter-event
timing, burst structure, multi-account shared-infrastructure graph motifs, and velocity-rule
trigger rates; define a \emph{degradation ratio} metric calibrated to a real-data noise
floor (1.0 = matches real variability, $k$ = $k$-times worse); and prove that
row-independent generators---the dominant paradigm---are \emph{structurally incapable}
of reproducing P3 graph motifs (Proposition~1) and produce non-positive within-entity IET
autocorrelation (Proposition~2), making the positive burst fingerprint of fraud sequences
unachievable regardless of architecture, training data size, or post-processing.

We benchmark four generators---CTGAN, TVAE, GaussianCopula \cite{sdv}, and TabularARGN
\cite{tabularargn}---on IEEE-CIS Fraud Detection \cite{ieee_cis} and the Amazon Fraud
Dataset \cite{amazon_fdb}.
All four generators fail severely: on IEEE-CIS (P1, P2, P4) composite degradation ratios
range from 24.4$\times$ (TVAE, after conditional sampling correction) to
39.0$\times$ (GaussianCopula); on Amazon FDB (P3), row-independent generators score
81.6--99.7$\times$, while TabularARGN's autoregressive architecture achieves 17.2$\times$
with full-column training.
We additionally document generator-specific failure modes (TVAE minority-class collapse,
CTGAN high-dimensional scalability failure) and their practical resolutions.
These findings hold implications beyond fraud: the P1--P4 framework extends directly to
any domain with entity-level sequential tabular data, including healthcare records,
e-commerce behavior, and network security.
We release our evaluation framework as open source to enable reproducible behavioral
fidelity assessment across domains.
\end{abstract}

\keywords{synthetic tabular data, fraud detection, behavioral fidelity, generative models,
tabular data generation, CTGAN, TVAE, evaluation framework, graph motifs, velocity rules}

\section{Introduction}
\label{sec:intro}

Financial fraud detection is a behavioral problem.
Production fraud systems flag accounts based on how their transaction \emph{sequences}
deviate from baseline: a card executing three transactions in 60 seconds; a cluster
of user accounts registered within hours and sharing the same IP address; an amount
spike ratio of 10$\times$ a cardholder's 30-day median.
These behavioral signals---temporal bursts, velocity rule violations, shared infrastructure
---are the operational basis of fraud detection \cite{bahnsen, xfraud, dalpozzolo}.

When raw transaction data cannot be shared due to privacy regulations such as GDPR
\cite{gdpr}, synthetic generation is the natural substitute.
The central assumption enabling this substitution is that generators
\emph{preserve the structure that matters for fraud detection}.
We demonstrate that this assumption is largely untested and, where we can measure it,
substantially violated.

\paragraph{The evaluation gap.}
Existing synthetic data benchmarks \cite{tabddpm, syntheval, stoian2026}
measure two properties: (1)~\emph{statistical fidelity}---whether marginal distributions
and pairwise correlations match the real data; and (2)~\emph{downstream utility}---whether
a classifier trained on synthetic data generalizes to real data (the TSTR protocol
\cite{tstr}).
Both are necessary but insufficient for fraud applications.
A generator can exactly reproduce the marginal distribution of transaction amounts and
pairwise amount-timestamp correlations while completely destroying the within-entity burst
structure that distinguishes card testers from ordinary cardholders.
AUROC on a held-out test set averages over all transactions equally; it does not
specifically test velocity-rule calibration, graph-based ring detection, or
sequence-level anomaly scoring.
We demonstrate this gap concretely: CTGAN achieves the second-highest TSTR AUROC
(0.798, near the real-data baseline of 0.903) yet the worst P3 degradation ratio
(99.7$\times$) in the benchmark.
GaussianCopula achieves the lowest TSTR AUROC (0.523) yet a better P3 score (81.6$\times$).
No Layer~1 or Layer~2 score predicts Layer~3 performance in any consistent direction.

This gap has direct operational consequences.
Consider a velocity rule that flags cards with $>3$ transactions within one hour.
In CTGAN synthetic data, rule R1 fires at an absolute rate that is 0.36 points lower
than in real fraud data.
A threshold tuned to minimize false positives on this synthetic data will be far too
permissive when deployed against real fraud, which triggers such rules at materially
higher rates.
The same disconnect applies to graph-based detectors trained on synthetic data with
randomly assigned device IDs and IP addresses: the learned ring signatures have no
relationship to actual fraud ring structure.

\paragraph{Prior work on synthetic fraud data.}
PaySim \cite{paysim} simulates mobile money transactions using agent-based rules that
explicitly encode behavioral fraud patterns, demonstrating that rule-based simulation
can preserve behavioral structure---but it is hand-coded simulation, not learned
generation, and cannot generalize to new fraud patterns.
\citet{new_money} benchmark tabular generators on financial transaction data and identify
evaluation gaps around temporal and behavioral properties, which our framework directly
addresses with formal definitions.
\citet{cpar_fraud} evaluate an autoregressive generator on credit card transaction
sequences using statistical fidelity metrics, but do not define a behavioral fraud
taxonomy, do not measure velocity rules or graph motifs, and study only a single generator.
\citet{fraudgan} apply GAN-based oversampling to credit card fraud data and evaluate
classifier performance, not temporal structure.
To our knowledge, no prior work benchmarks multiple tabular generators against a formal
behavioral fraud pattern taxonomy spanning temporal, velocity, \emph{and} graph-structural
dimensions.

\paragraph{Contributions.}
We make four contributions:

\begin{enumerate}[leftmargin=*,noitemsep]
  \item \textbf{Behavioral fraud pattern taxonomy} (Section~\ref{sec:taxonomy}): Four
  formally defined, measurable patterns grounded in the fraud detection literature
  \cite{bahnsen, xfraud, dalpozzolo, fraud_survey}---inter-event time distribution (P1),
  burst structure and active lifetime (P2), shared-infrastructure graph motifs (P3), and
  velocity-rule trigger rates (P4)---with precise metric definitions enabling
  cross-generator and cross-dataset comparison.

  \item \textbf{Degradation-ratio evaluation framework} (Section~\ref{sec:framework}):
  A noise-floor-anchored scoring system that normalizes raw behavioral metrics into
  interpretable ratios, together with a three-layer evaluation protocol (statistical
  fidelity + downstream utility + behavioral fidelity) that exposes the gap between
  existing dimensions and behavioral reality.

  \item \textbf{Empirical benchmark} (Section~\ref{sec:results}): Evaluation of CTGAN,
  TVAE, GaussianCopula, and TabularARGN on IEEE-CIS (P1, P2, P4) and Amazon FDB (P3).
  On IEEE-CIS, composite degradation ratios range from 24.4$\times$ (TVAE) to
  39.0$\times$ (GaussianCopula); all four generators fail severely relative to the
  real-data noise floor.
  On Amazon FDB (P3), row-independent generators range from 81.6--99.7$\times$;
  TabularARGN achieves 17.2$\times$ with full-column training, demonstrating that
  autoregressive architecture provides a measurable but insufficient improvement for
  graph motif preservation.

  \item \textbf{Failure mode documentation} (Section~\ref{sec:discussion}): TVAE
  minority-class collapse under unconditional sampling (and its resolution via conditional
  sampling), CTGAN high-dimensional scalability failure, the architectural advantage of
  TabularARGN for P3 graph motifs versus its inability to improve P1/P2/P4 temporal
  patterns, and the theoretically-grounded impossibility of cross-entity pattern
  preservation for row-independent generators---with actionable practitioner guidance.
\end{enumerate}

\section{Related Work}
\label{sec:related}

\paragraph{Tabular synthetic data generation.}
CTGAN and TVAE \cite{ctgan} established the conditional GAN and VAE paradigm for tabular
data, using a conditional vector and mode-specific normalization to handle multi-modal
continuous distributions and class imbalance.
\citet{sdv_original} introduced the Synthetic Data Vault, of which GaussianCopula is a
component: it models each column's marginal independently and captures pairwise dependencies
via a parametric Gaussian copula.
Diffusion-based methods---TabDDPM \cite{tabddpm} and TabSyn \cite{tabsyn}---achieve
state-of-the-art statistical fidelity on tabular benchmarks but have not been evaluated
for behavioral fraud pattern preservation.
CTAB-GAN \cite{ctabgan} improves on CTGAN's handling of mixed-type columns and long-tail
distributions; REaLTabFormer \cite{realtabformer} uses a transformer to capture
relational structure between tables.
TabularARGN \cite{tabularargn} conditions each feature on previously generated features
within a row via an autoregressive architecture.
None of these generators are evaluated against behavioral fraud metrics in their original
publications.

\paragraph{Synthetic data evaluation.}
SDMetrics \cite{sdv} measures column-shape, pairwise-trend, and ML-efficacy scores.
SynthEval \cite{syntheval} unifies fidelity, utility, and privacy evaluation.
\citet{tabddpm} benchmark multiple generators across 15 datasets in their evaluation;
the TSTR protocol \cite{tstr} is now standard practice in the field.
\citet{stoian2026} provide a comprehensive taxonomy of deep learning approaches for
tabular data generation, organizing evaluation dimensions as utility, alignment, fidelity,
privacy, and diversity---the most thorough recent survey of the field.
Their framework introduces formal numbered requirements for each dimension and evaluates
a wide range of generator architectures, including GAN-based, VAE-based, diffusion-based,
and LLM-based models.
Crucially, however, behavioral fraud patterns---inter-event timing, burst structure,
shared-infrastructure graph motifs, and velocity-rule calibration---are not among the
evaluated dimensions in any of the above frameworks.
This is the gap our work addresses.

\paragraph{Synthetic financial transaction data.}
PaySim \cite{paysim} uses agent-based simulation to produce mobile money transactions
that explicitly encode fraud behavior (burst patterns, rapid withdrawals), but produces
pre-programmed rather than learned distributions.
\citet{new_money} is the closest prior work to ours: they benchmark tabular generators
on financial transaction data and identify evaluation gaps around temporal properties.
However, they do not formalize a behavioral fraud taxonomy, do not evaluate velocity rules
or graph motifs, and do not report degradation ratios relative to a noise floor.
\citet{cpar_fraud} evaluate an autoregressive generator on credit card sequences using
statistical metrics only, with no velocity rule or graph evaluation.

\paragraph{Fraud detection and behavioral signals.}
The fraud detection literature provides strong evidence that behavioral features are the
most discriminative signals \cite{fraud_survey}.
\citet{bahnsen} show that aggregate time-based features---transaction counts per rolling
window, time elapsed since last event---are among the strongest predictors of credit card
fraud, outperforming static features.
\citet{dalpozzolo} demonstrate from a practitioner perspective that recency-weighted
velocity aggregates are essential to fraud detection system performance.
Graph-based detection \cite{xfraud} exploits shared device and IP address structure to
identify coordinated fraud rings, constructing explicit bipartite entity-attribute graphs
and detecting high-density subgraphs as fraud ring signatures.
These bodies of work define precisely the behavioral signals whose preservation we test.

\section{Behavioral Fraud Pattern Taxonomy}
\label{sec:taxonomy}

\subsection{Notation}

Let $\mathcal{U}$ be the set of transaction entities (card fingerprints, user accounts).
For entity $u \in \mathcal{U}$, let
$\mathcal{T}_u = \langle (t_1^u, \mathbf{x}_1^u), \ldots, (t_{n_u}^u, \mathbf{x}_{n_u}^u) \rangle$
be its chronologically ordered transaction sequence, with $t_1^u < \cdots < t_{n_u}^u$.
Let $\mathcal{F} \subset \mathcal{U}$ be fraud entities and
$\mathcal{N} = \mathcal{U} \setminus \mathcal{F}$ non-fraud entities.
For a class label $c \in \{F, N\}$, let $\mathcal{C}$ denote the corresponding entity set
($\mathcal{F}$ or $\mathcal{N}$).
$D_{\text{real}}$ and $D_{\text{syn}}^G$ denote real and generator-$G$-produced synthetic
datasets; tilde notation ($\tilde{\mathcal{F}}$, $\tilde{\mathcal{N}}$) denotes synthetic
counterparts.
$W_1(P, Q)$ denotes the Wasserstein-1 (Earth Mover's) distance \cite{wasserstein} between
empirical distributions $P$ and $Q$.

\subsection{P1: Inter-Event Time Distribution}
\label{sec:p1}

\paragraph{Motivation.}
Account-takeover, card-testing, and credential-stuffing attacks share a temporal fingerprint:
rapid bursts of transactions with compressed inter-event gaps, followed by silence
\cite{bahnsen}.
A generator that reproduces the marginal timestamp distribution but destroys within-entity
temporal ordering fails to capture this distinguishing structure.

\paragraph{Definition.}
For entity $u$ with $n_u \geq 2$ transactions, the inter-event time (IET) sequence is:
\begin{equation}
  \Delta_u = \langle \delta_i^u \rangle_{i=1}^{n_u-1}, \qquad
  \delta_i^u = t_{i+1}^u - t_i^u \;\geq\; 0.
\end{equation}
The class-level IET distribution is
$\mathrm{IETD}^c(D) = \bigcup_{u \in \mathcal{C}} \Delta_u$.
Within-entity temporal autocorrelation at lag~1 captures burst regularity:
\begin{equation}
  \rho_u = \operatorname{Corr}\!\left(\langle\delta_i^u\rangle_{i=1}^{n_u-2},\;
                                      \langle\delta_{i+1}^u\rangle_{i=1}^{n_u-2}\right).
\end{equation}

\paragraph{Metrics.}
\begin{align}
  B_1^c(G) &= W_1\!\left(\mathrm{IETD}^c(D_{\text{real}}),\;
                          \mathrm{IETD}^c(D_{\text{syn}}^G)\right), \\
  B_1^{\mathrm{AC}}(G) &= \left|\mathbb{E}_{u \in \mathcal{F}}[\rho_u]
                                -\mathbb{E}_{u \in \tilde{\mathcal{F}}}[\rho_u]\right|.
\end{align}
$B_1^c$ measures distributional shift in IETs; $B_1^{\mathrm{AC}}$ measures the
collapse of within-entity autocorrelation---the burst-regularity fingerprint of fraud.
We report $B_1^F$ (fraud class) and $B_1^{\mathrm{AC}}$ in the benchmark.

\subsection{P2: Burst Structure and Active Lifetime}
\label{sec:p2}

\paragraph{Motivation.}
Fraudsters operate under time pressure: detection windows close quickly.
This produces short active lifetimes (hours to days) with dense transaction bursts.
Card testers execute a single tight burst of small probes, then disappear.
Legitimate accounts have long active lifetimes with scattered, low-density activity
\cite{dalpozzolo}.

\paragraph{Definition.}
For gap threshold $\delta > 0$ (evaluated at $\delta \in \{1\text{ min}, 5\text{ min},
30\text{ min}\}$), a \emph{burst} is a maximal contiguous subsequence of
$\mathcal{T}_u$ with consecutive gaps $\leq \delta$.
Denote the burst set of entity $u$ at threshold $\delta$ as $\mathcal{B}_u(\delta)$,
with burst length $L(b) = |b|$.
The active lifetime is $AL_u = t_{n_u}^u - t_1^u$ (set to 0 for singleton entities).

\paragraph{Metrics.}
\begin{align}
  B_{2,AL}(G) &= W_1\!\left(
      \{AL_u\}_{u \in \mathcal{F}},\;
      \{\widetilde{AL}_u\}_{u \in \tilde{\mathcal{F}}}\right), \\
  B_{2,BL}(G, \delta) &= W_1\!\left(
      \bigl\{L(b)\bigr\}_{b \in \bigcup_{u \in \mathcal{F}} \mathcal{B}_u(\delta)},\;
      \bigl\{L(b)\bigr\}_{b \in \bigcup_{u \in \tilde{\mathcal{F}}} \mathcal{B}_u(\delta)}\right).
\end{align}
We report $B_{2,AL}$ (fraud active lifetime) and $\bar{B}_{2,BL} = \frac{1}{3}\sum_\delta B_{2,BL}(G,\delta)$
(burst length averaged across all three thresholds).

\subsection{P3: Shared-Infrastructure Graph Motifs}
\label{sec:p3}

\paragraph{Motivation.}
Fraud rings share infrastructure: device IDs, IP addresses, billing addresses.
This creates a characteristic bipartite graph structure---power-law fan-out distributions,
high triangle counts, dense connected components---that is a primary signal for coordinated
fraud detection \cite{xfraud}.
Row-independent generators sample shared attributes from marginal distributions,
which cannot reproduce cross-row co-occurrence structure by construction.

\paragraph{Definition.}
Define a bipartite entity-attribute graph $\mathcal{G} = (U, A, E)$ where $U$ are entity
nodes, $A$ are shared-attribute nodes (device IDs, IP addresses), and $(u, a) \in E$ if
entity $u$ used attribute $a$.
The \emph{fan-out} of attribute $a$ is $\mathrm{FO}(a) = |\{u : (u, a) \in E\}|$.
The entity projection graph $\mathcal{G}_U$ connects two entities if they share at least
one attribute.

\paragraph{Metrics.}
\begin{align}
  B_{3,\mathrm{FO}}(G) &= W_1\!\left(
      \{\mathrm{FO}(a)\}_{a \in A^{\text{real}}},\;
      \{\mathrm{FO}(a)\}_{a \in A^{\text{syn}}}\right), \\
  B_{3,\mathrm{CC}}(G) &= \left|\mathrm{CC}(\mathcal{G}_U^{\text{real}})
                                - \mathrm{CC}(\mathcal{G}_U^{\text{syn}})\right|, \\
  B_{3,\triangle}(G) &= \left|\log\!\left(
      \frac{|\triangle(\mathcal{G}_U^{\text{real}})|+1}
           {|\triangle(\mathcal{G}_U^{\text{syn}})|+1}\right)\right|,
\end{align}
where $\mathrm{CC}(\cdot) \in [0, 1]$ is the global clustering coefficient and
$\triangle(\cdot)$ is the triangle count.
These capture fan-out distributional shift, local clustering structure, and motif density
respectively.
Figure~\ref{fig:bipartite} illustrates the structural difference between a real fraud ring
(multiple users sharing a device node) and the synthetic analog (each user assigned a unique device).

\begin{figure}[t]
  \centering
  \includegraphics[width=0.85\textwidth]{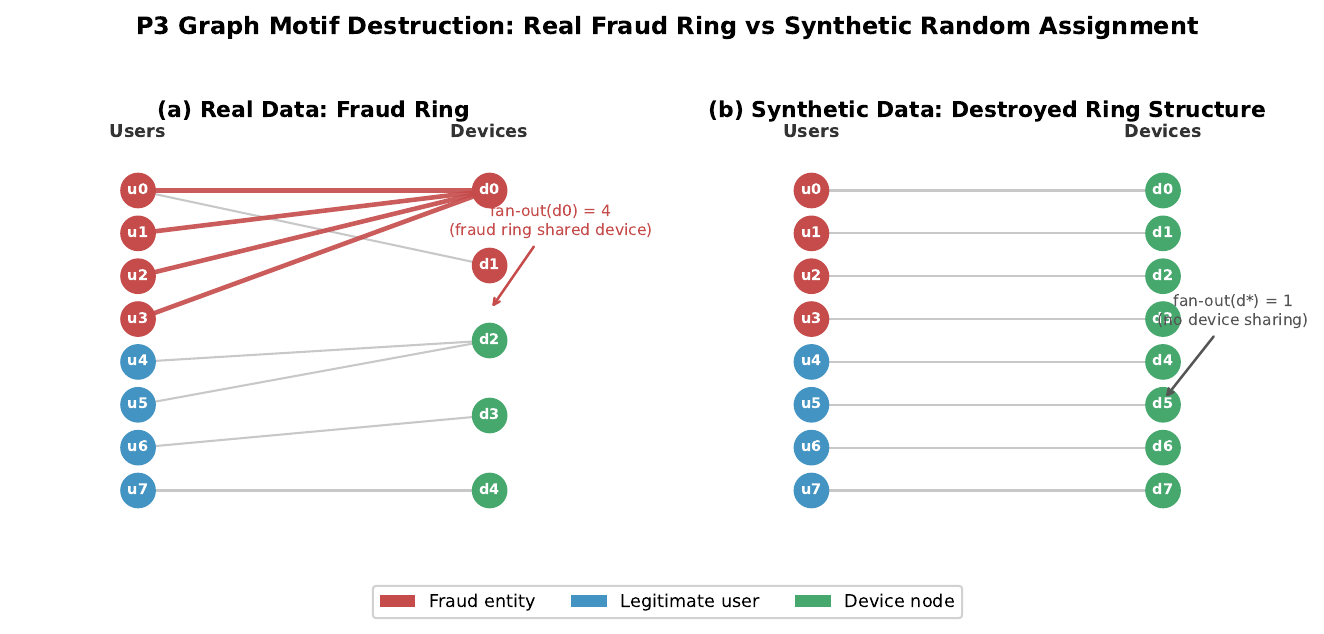}
  \caption{%
    \textbf{P3 graph motif destruction.}
    In real data (left), fraud ring members share device nodes, creating high fan-out and
    dense bipartite structure.
    In synthetic data (right), row-independent generators sample device IDs from a
    marginal distribution, collapsing fan-out to~1 for every device node.
    All tested generators reproduce this failure to varying degrees.
  }
  \label{fig:bipartite}
\end{figure}

\subsection{P4: Velocity-Rule Trigger Rates}
\label{sec:p4}

\paragraph{Motivation.}
Production fraud systems are calibrated around velocity rules \cite{bahnsen, dalpozzolo}.
If synthetic data does not preserve the rates at which fraud entities trigger each rule,
thresholds tuned on synthetic data will be miscalibrated in production.
This is the only pattern that directly connects behavioral fidelity to operational
miscalibration risk---and, to our knowledge, has not been evaluated in any prior synthetic
data benchmark.

\paragraph{Definition.}
A velocity rule $r = (f, \theta, w, \mathit{op})$ triggers at transaction $(u, t)$ if
$f(\mathcal{T}_u, t, w)\;\mathit{op}\;\theta$.
The class-conditioned trigger rate is
$\tau^c(r, D) = |\mathcal{C}|^{-1} \sum_{u \in \mathcal{C}} \mathbf{1}_r(u)$.
We define a canonical set $\mathcal{R}$ of eight rules (Table~\ref{tab:velocity_rules})
derived from industry practice \cite{bahnsen, dalpozzolo}.

\paragraph{Metric.}
\begin{equation}
  B_4(G) = \frac{1}{|\mathcal{R}|} \sum_{r \in \mathcal{R}} \left|
      \tau^F(r,\, D_{\text{real}}) - \tau^F(r,\, D_{\text{syn}}^G)\right|.
\end{equation}

\begin{table}[t]
\centering
\small
\caption{Canonical velocity rule set $\mathcal{R}$, derived from \citet{bahnsen} and
\citet{dalpozzolo}. Rules R1--R5 are active in all benchmark evaluations;
R6 is active for datasets with amount history; R7--R8 require columns absent from the
48-column training subset and are excluded from the P4 composite score.}
\label{tab:velocity_rules}
\begin{tabular}{@{}llrrl@{}}
\toprule
\textbf{Rule} & \textbf{Feature} & \textbf{Op} & \textbf{Threshold} & \textbf{Window} \\
\midrule
R1 & Transaction count per card  & $>$ & 3       & 1 hour \\
R2 & Distinct merchants per card & $>$ & 5       & 24 hours \\
R3 & Sum of amounts per card     & $>$ & \$1,000 & 24 hours \\
R4 & Transaction count per card  & $>$ & 1       & Account age $<$ 7 days \\
R5 & Distinct payment methods    & $>$ & 2       & 7 days \\
R6 & Max / median amount ratio   & $>$ & 3.0     & 30-day history \\
R7 & Failed transactions per card& $>$ & 2       & 1 hour \\
R8 & Distinct IPs per card       & $>$ & 3       & 24 hours \\
\bottomrule
\end{tabular}
\end{table}

\section{Evaluation Framework}
\label{sec:framework}

\subsection{Degradation Ratio}

Raw behavioral metrics are incommensurable across patterns: a Wasserstein distance
measured in seconds cannot be compared to a dimensionless autocorrelation gap or a
trigger rate difference.
We normalize each sub-metric as a \emph{degradation ratio} relative to the real-data
noise floor:

\begin{equation}
  \mathrm{DR}(G, m) = \frac{\text{metric}_m\!\left(D_{\text{real}},\, D_{\text{syn}}^G\right)}
                           {\text{metric}_m\!\left(D_{\text{real},A},\, D_{\text{real},B}\right)},
\label{eq:dr}
\end{equation}

where $D_{\text{real},A}$ and $D_{\text{real},B}$ are a random 50/50 split of the real
training data.
The denominator is the \emph{noise floor}: the irreducible error from sampling two halves
of the same real dataset.
A perfect generator scores $\mathrm{DR} = 1.0$ (indistinguishable from a real split);
$\mathrm{DR} = 5.0$ means five times more divergent than real-data sampling variability.

This anchoring choice has three important properties.
First, it requires no ground-truth labels---any dataset can produce its own baseline.
Second, it makes results interpretable across diverse metric scales: a degradation ratio
of 30$\times$ for autocorrelation carries the same meaning as 30$\times$ for a Wasserstein
distance, even though the raw values differ by orders of magnitude.
Third, the 50/50 split is the most demanding choice from the family of possible splits.
Sampling variability in an empirical distribution scales with $1/\sqrt{n}$; a smaller
split half (e.g., 30\%) has higher sampling variance, producing a larger noise floor
denominator and thus \emph{lower} reported degradation ratios.
The equal split minimizes this variance, producing the smallest noise floor and the
highest (most stringent) degradation ratios.
Any unequal split would yield equal or smaller degradation ratios, so our reported
values are upper bounds across split choices; the qualitative finding that all tested
generators fail catastrophically ($>$20$\times$) is robust to this design decision.

\subsection{Composite Behavioral Fidelity Score}

The composite behavioral fidelity score is the equal-weighted mean of all sub-metric
degradation ratios:
\begin{equation}
  \mathrm{BF}(G) = \frac{1}{K} \sum_{k=1}^{K} \mathrm{DR}(G, m_k),
\end{equation}
where $K$ is the number of evaluated sub-metrics (five for IEEE-CIS; one for Amazon FDB
in the current benchmark).
Equal weighting is a conservative default; practitioners with pattern-specific priorities
may apply custom weights.
We always report all disaggregated sub-metric scores alongside the composite.

\subsection{Three-Layer Evaluation Protocol}

We evaluate each generator on three dimensions:

\begin{enumerate}[leftmargin=*,noitemsep]
  \item \textbf{Statistical fidelity (Layer 1)}: Column-wise marginal distribution
  similarity (Jensen-Shannon divergence for categorical features, Wasserstein-1
  \cite{wasserstein} for continuous) and mean absolute pairwise correlation matrix
  difference, following SDMetrics \cite{sdv}.

  \item \textbf{Downstream utility (Layer 2)}: AUROC of an XGBoost classifier
  \cite{xgboost} trained on synthetic data and evaluated on the real held-out test
  split (Train-on-Synthetic, Test-on-Real; TSTR \cite{tstr}).

  \item \textbf{Behavioral fidelity (Layer 3)}: Degradation ratio scores for P1--P4
  (Equation~\ref{eq:dr}).
\end{enumerate}

Layers 1 and 2 are established practice in the synthetic tabular data literature
\cite{tabddpm, syntheval}.
Layer 3 is the novel contribution of this paper.
Our central empirical claim is: a generator can achieve acceptable Layers 1 and 2 scores
while exhibiting catastrophic Layer 3 failure.
Figure~\ref{fig:framework} summarizes this three-layer protocol.

\begin{figure}[t]
  \centering
  \includegraphics[width=0.92\textwidth]{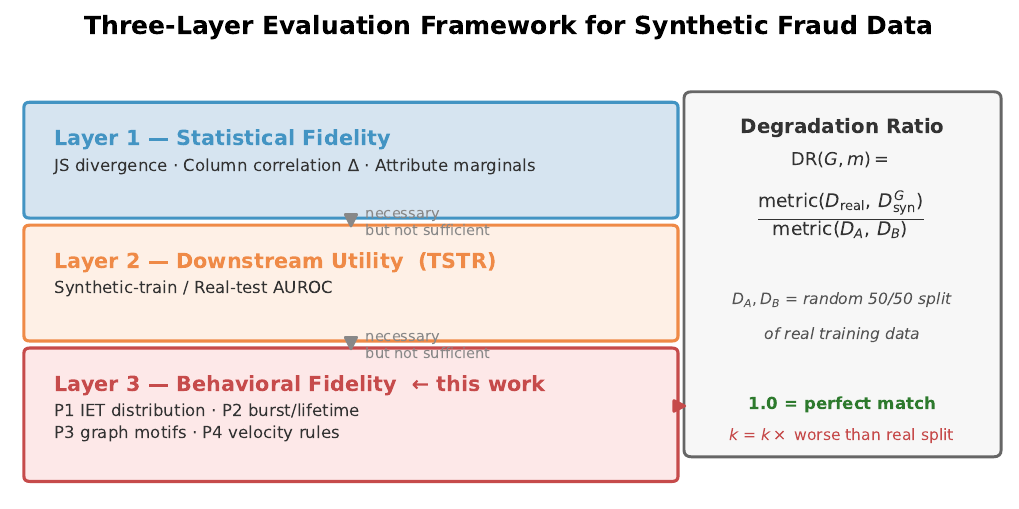}
  \caption{%
    \textbf{Three-layer evaluation framework.}
    Layers~1 (statistical fidelity) and~2 (downstream utility) are established
    practice but do not capture behavioral fraud patterns.
    Layer~3 (behavioral fidelity, this work) quantifies degradation relative to
    a real-data noise floor via the degradation ratio $\mathrm{DR}(G,m)$.
  }
  \label{fig:framework}
\end{figure}

\section{Experimental Setup}
\label{sec:setup}

\subsection{Datasets}

\paragraph{IEEE-CIS Fraud Detection.}
The 2019 IEEE-CIS Kaggle Fraud Detection dataset \cite{ieee_cis} contains 590,540
card-not-present e-commerce transactions with 394 features (merged transaction and
identity tables), a 3.5\% fraud rate, and 13,553 unique card entities (the \texttt{card1}
field identifies a card fingerprint).
We apply a temporal 80/20 train-test split at the 80th percentile of \texttt{TransactionDT},
yielding 472,432 training rows with a median entity size of 4 and a maximum of 14,932
transactions per entity.
We evaluate P1, P2, and P4 on this dataset; it is the richest publicly available source
of within-entity transaction sequences.

\paragraph{Amazon Fraud Dataset (Amazon FDB).}
This Kaggle e-commerce fraud dataset \cite{amazon_fdb} contains 151,112 transactions with
explicit \texttt{device\_id} and \texttt{ip\_address} fields, a 9.4\% fraud rate, and
exactly one transaction per user.
We evaluate P3 (shared-infrastructure graph motifs) exclusively on this dataset; the
device and IP co-occurrence structure yields high attribute fan-out for fraud-linked
entities, providing a strong signal for graph motif evaluation.

\paragraph{Scope rationale.}
Amazon FDB's one-transaction-per-user structure makes P1, P2, and P4 undefined (all
require $n_u \geq 2$).
IEEE-CIS device columns (\texttt{DeviceInfo}, \texttt{id\_30}--\texttt{id\_33}) are too
sparse after competitive masking for reliable bipartite graph construction.
Using each dataset for the patterns it uniquely supports is a deliberate design choice.

\subsection{Generators and Training Configuration}

We evaluate four generators representing the dominant paradigms:

\begin{itemize}[noitemsep,leftmargin=*]
  \item \textbf{CTGAN} \cite{ctgan}: Conditional GAN with mode-specific normalization.
  Trained on a stratified 1:3 (fraud:legitimate) subsample of IEEE-CIS (66,156 rows)
  due to a scalability failure documented in Section~\ref{sec:failure_ctgan}.
  Synthetic output uses conditional sampling to restore the real 3.5\% fraud rate.
  Hyperparameters: epochs=300, batch\_size=500, generator/discriminator dim=(256,256).

  \item \textbf{TVAE} \cite{ctgan}: Variational autoencoder with conditional vector.
  Trained on the full training set.
  Exhibits minority-class collapse under unconditional sampling (Section~\ref{sec:failure_tvae});
  all benchmark results use \texttt{sample\_from\_conditions()} to restore the 3.5\% fraud rate.

  \item \textbf{GaussianCopula} \cite{sdv, sdv_original}: Parametric marginal+copula
  model. Trained on the full training set; correctly preserves class proportions
  under unconditional sampling.

  \item \textbf{TabularARGN} \cite{tabularargn}: Autoregressive tabular generator
  (MOSTLY AI local mode). Trained on the same 48-column behavioral feature subset.
  Hyperparameters: max\_training\_time=240 min, model=MOSTLY\_AI/Large.
\end{itemize}

All SDV-based generators use version 1.35.0.
CTGAN, TVAE, and GaussianCopula are trained on Databricks ML Runtime 14.3 LTS
(Apache Spark 3.5, CUDA 12.2) with a 4-worker cluster (32 GB RAM per worker).
TabularARGN is trained using MOSTLY AI SDK local mode (see Appendix~\ref{app:hyperparams}).

\subsection{Feature Subset}
\label{sec:feature_subset}

All four generators are trained on a curated 48-column behavioral feature subset of
IEEE-CIS (Table~\ref{tab:feature_subset}), excluding the 339 Vesta-engineered V-columns.
These are excluded for two reasons: (1)~they are opaque risk score features not used
in any P1--P4 metric computation; and (2)~SDV infers them as categorical due to limited
unique-value counts in integer ranges, which causes CTGAN's one-hot encoder to construct
intractably large tensors (see Section~\ref{sec:failure_ctgan}).
Using the identical 48-column subset across all four generators ensures fair cross-generator
comparison.

\begin{table}[t]
\centering
\small
\caption{Feature subset used for all four generators (48 columns). V1--V339 excluded.}
\label{tab:feature_subset}
\begin{tabular}{@{}ll@{}}
\toprule
\textbf{Group} & \textbf{Columns} \\
\midrule
Core evaluation     & \texttt{TransactionDT}, \texttt{TransactionAmt}, \texttt{isFraud}, \texttt{card4} \\
Velocity counts     & \texttt{C1}--\texttt{C14} (transaction velocity counts) \\
Time-delta features & \texttt{D1}--\texttt{D15} (days since prior transactions/events) \\
Match flags         & \texttt{M1}--\texttt{M9} (billing/shipping address match) \\
Geographic          & \texttt{addr1}, \texttt{addr2}, \texttt{dist1}, \texttt{dist2} \\
Email domain        & \texttt{P\_emaildomain}, \texttt{R\_emaildomain} \\
\midrule
\textbf{Total}      & \textbf{48 columns} \\
\bottomrule
\end{tabular}
\end{table}

\subsection{Entity Assignment for Synthetic Data}
\label{sec:entity_assign}

\textbf{Design intent and lower-bound guarantee.}
Because row-independent generators do not produce entity identifiers, we assign
pseudo-entity IDs to synthetic rows using the real entity-size distribution.
This procedure is deliberately generous to generators: it imposes the correct entity
structure \emph{externally}, giving each generator the benefit of optimal entity
groupings it did not produce.
The reported degradation ratios are therefore \emph{lower bounds} on behavioral
degradation---real deployment would require entity structure to be generated jointly
with the rows, which no tested generator supports.
This design isolates the generator's within-entity coherence failure from its
entity-structure failure; the latter would only increase the reported values.

The card entity column \texttt{card1} (13,553 unique values) is excluded from generator
training to prevent CTGAN's one-hot encoder from producing a 13,553-dimensional input
tensor.
For P1, P2, and P4 evaluation, synthetic rows are assigned pseudo-entity IDs as follows:
for each class $c \in \{0,1\}$ independently, (1) compute the empirical entity-size
distribution $\mathcal{S}^c = \{|\mathcal{T}_u|\}_{u \in \mathcal{C}^{\text{real}}}$;
(2) draw sizes $s_1, s_2, \ldots$ from $\mathcal{S}^c$ with replacement and assign
group label $\texttt{syn\_}c\texttt{\_}k$ to $s_k$ consecutive rows until exhausted;
(3) randomly permute within-class assignments (seed 42).
This preserves the real entity-size distribution and transaction density ($\approx$43
rows per entity for IEEE-CIS), yielding $\approx$11{,}000 synthetic entities per
generator (472,432 rows for CTGAN; a comparable count for TVAE, whose
conditional-sampling output matches the same training set size).

\subsection{Behavioral Fidelity Baseline}

The noise floor is computed from a random 50/50 split of the real training data.
Table~\ref{tab:baseline} reports all baseline values; these are the denominators in
Equation~\ref{eq:dr}.

\begin{table}[h]
\centering
\small
\caption{Baseline raw scores (real-data 50/50 split). All degradation ratios are
         relative to these values; a perfect generator scores 1.0 on each.}
\label{tab:baseline}
\begin{tabular}{@{}llr@{}}
\toprule
\textbf{Dataset} & \textbf{Metric} & \textbf{Baseline value} \\
\midrule
IEEE-CIS   & P1: IET $W_1$, fraud (seconds)         & 9{,}581.9   \\
IEEE-CIS   & P1: within-entity autocorrelation gap   & 0.0027      \\
IEEE-CIS   & P2: active lifetime $W_1$, fraud (sec)  & 195{,}109.9 \\
IEEE-CIS   & P2: burst length $W_1$ ($\delta$=5 min) & 0.0408      \\
IEEE-CIS   & P4: mean velocity-rule trigger rate gap  & 0.0110      \\
\midrule
Amazon FDB & P3: fan-out $W_1$                       & 0.0046      \\
\bottomrule
\end{tabular}
\end{table}

\section{Results}
\label{sec:results}

\subsection{Statistical Fidelity and Downstream Utility}

Table~\ref{tab:layer1_2} reports Layer~1 and Layer~2 results.
All generators are evaluated on their final synthetic files used for the behavioral
fidelity benchmark: TVAE uses the conditional-sampling corrected output (fraud rate
restored to 3.5\%), which is also the output evaluated in the behavioral fidelity tables.
The unconditional TVAE failure mode (fraud rate $\approx$0.03\%) is documented in
Section~\ref{sec:failure_tvae}.

\begin{table}[t]
\centering
\small
\caption{Layer 1/2 evaluation results.
\textbf{JS Div.}: mean Jensen-Shannon divergence across all 48 columns (lower is better).
\textbf{Corr.\ $|\Delta|$}: mean absolute pairwise correlation matrix difference (lower is better).
\textbf{TSTR AUROC}: XGBoost \cite{xgboost} trained on synthetic, tested on real held-out set \cite{tstr} (higher is better; real TRTR baseline = 0.903).
All TVAE results reflect the conditional-sampling corrected run (fraud rate restored to 3.5\%); unconditional generation collapses the fraud rate to $\approx$0.03\% (Section~\ref{sec:failure_tvae}).}
\label{tab:layer1_2}
\begin{tabular}{@{}lcccc@{}}
\toprule
\textbf{Generator} & \textbf{Fraud Rate} & \textbf{JS Div.\ (mean)} & \textbf{Corr.\ $|\Delta|$} & \textbf{TSTR AUROC} \\
\midrule
Real (TRTR)          & 0.035 & ---   & ---   & 0.903 \\
\midrule
CTGAN                & 0.035 & 0.257 & 0.211 & 0.798 \\
TVAE                 & 0.035 & 0.294 & 0.214 & 0.806 \\
GaussianCopula       & 0.035 & 0.313 & 0.186 & 0.523 \\
TabularARGN          & 0.034 & 0.224 & 0.183 & 0.681 \\
\bottomrule
\end{tabular}
\end{table}

Table~\ref{tab:layer1_2} reveals the central thesis empirically.
CTGAN achieves the second-highest TSTR AUROC (0.798) yet the worst P3 composite (99.7$\times$,
Table~\ref{tab:behavioral_fdb}).
GaussianCopula achieves the \emph{lowest} TSTR AUROC (0.523) yet a better P3 score (81.6$\times$).
TabularARGN has both the best Layer~1 scores (JS 0.224, corr $|\Delta|$ 0.183) and the best
behavioral score (17.2$\times$)---but 17.2$\times$ is still catastrophic relative to the
1.0 noise floor, and TabularARGN achieves no meaningful advantage over row-independent generators on
IEEE-CIS temporal and velocity patterns (36.3$\times$ vs.\ 39.0$\times$ for GaussianCopula).
Layer~1 and Layer~2 rankings therefore provide no reliable signal about Layer~3 performance
in either direction, confirming that behavioral fidelity is an independent evaluation dimension.

\subsection{Behavioral Fidelity: IEEE-CIS (P1, P2, P4)}
\label{sec:results_ieee}

Table~\ref{tab:behavioral_ieee} reports P1, P2, and P4 degradation ratios on IEEE-CIS.
All ratios are relative to the noise floor in Table~\ref{tab:baseline}.
Figure~\ref{fig:iet} compares real and synthetic fraud IET distributions on IEEE-CIS;
every generator collapses the characteristic compressed inter-event structure.

\begin{figure}[t]
  \centering
  \includegraphics[width=\textwidth]{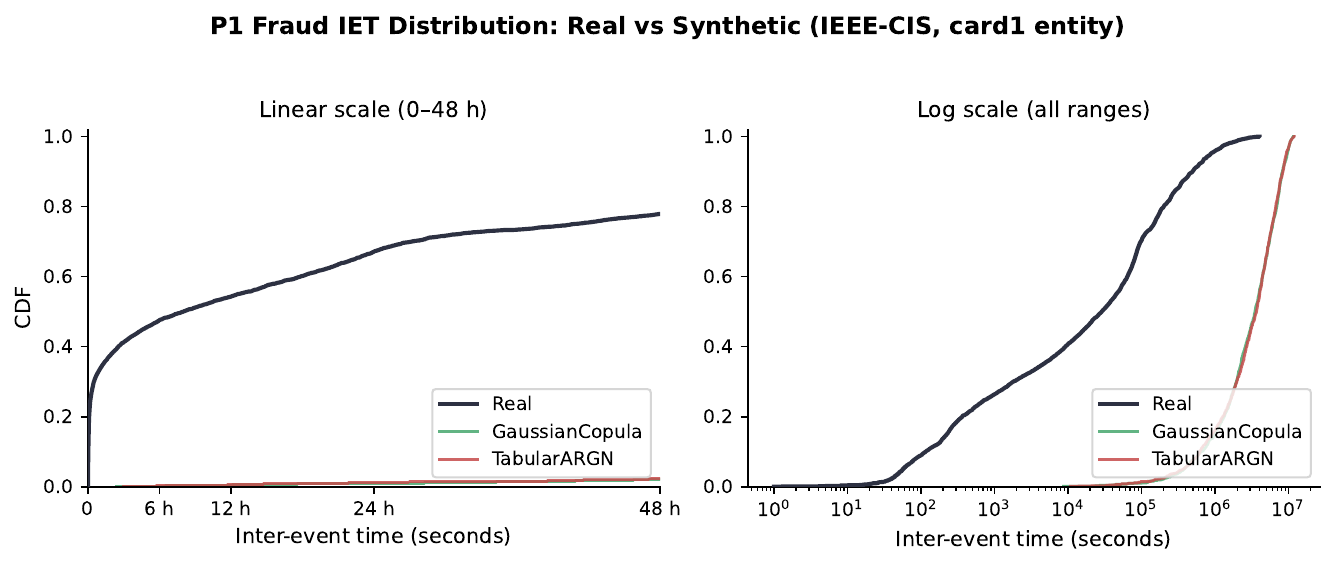}
  \caption{%
    \textbf{P1 fraud inter-event time (IET) distributions, IEEE-CIS.}
    Real fraud transactions (black) exhibit a compressed, heavy-tailed IET distribution
    reflecting account-takeover burst patterns.
    All synthetic generators (colors) produce diffuse, near-uniform distributions
    with no burst structure.
    Left: linear scale (0--48 h) highlighting burst region; right: log scale showing
    all ranges.
  }
  \label{fig:iet}
\end{figure}

\begin{table}[t]
\centering
\small
\caption{Behavioral fidelity degradation ratios on IEEE-CIS (P1, P2, P4).
Lower is better; 1.0 = real-data noise floor; $>$10 = severe failure.
\textbf{Bold} values indicate best (lowest) per column; ties in P2~AL are shared by CTGAN and GaussianCopula.
${}^*$CTGAN trained on 66K-row subsample (1:3 fraud:legitimate ratio; see Section~\ref{sec:threats}).
${}^\dagger$TVAE evaluated with conditional sampling to restore the 3.5\% fraud rate;
unconditional generation collapses fraud rate to $\approx$0.03\% and inflates composite
to 45.4$\times$ (Section~\ref{sec:failure_tvae}).}
\label{tab:behavioral_ieee}
\resizebox{\textwidth}{!}{%
\begin{tabular}{@{}lcccccc@{}}
\toprule
\textbf{Generator} & \textbf{P1 IETD} & \textbf{P1 AutoCorr} & \textbf{P2 AL} & \textbf{P2 BurstLen} & \textbf{P4 VR-TR} & \textbf{Composite} \\
\midrule
CTGAN$^*$          & 30.0  & 40.5  & \textbf{35.5} & 32.2  & 22.9  & 32.2  \\
TVAE$^\dagger$     & \textbf{25.9} & \textbf{5.9}  & 38.0  & \textbf{31.5} & 20.6  & \textbf{24.4} \\
GaussianCopula     & 30.1  & 75.1  & \textbf{35.5} & 32.6  & 21.6  & 39.0  \\
TabularARGN        & 30.7  & 60.9  & 37.1  & 32.6  & \textbf{20.1} & 36.3  \\
\midrule
Noise floor (real) & 1.0 & 1.0 & 1.0 & 1.0 & 1.0 & 1.0 \\
\bottomrule
\end{tabular}}%
\end{table}

\paragraph{CTGAN (composite: 32.2$\times$).}
CTGAN sub-metric degradation ratios range from 22.9$\times$ (P4 velocity-rule trigger
rates) to 40.5$\times$ (P1 temporal autocorrelation).
The autocorrelation collapse is the most diagnostically informative sub-metric: in real
fraud sequences a short IET gap is reliably followed by another short gap (the burst
fingerprint); in synthetic data, consecutive gaps within an assigned entity are independent
draws from the marginal IET distribution, yielding near-zero autocorrelation.
On P4, rule-level deviations include R1 (count $>3$ per hour), R3 (sum $>$\$1,000 per
day), and R6 (amount spike $>3\times$ median)---all of which fire at materially lower rates
in synthetic data than in real fraud, directly representing velocity-threshold
miscalibration risk for downstream systems.

\paragraph{TVAE (composite: 24.4$\times$).}
After applying conditional sampling to correct the minority-class collapse documented in
Section~\ref{sec:failure_tvae}, TVAE achieves the lowest composite degradation ratio of
any tested generator (24.4$\times$).
The most striking improvement is P1 temporal autocorrelation: 5.9$\times$, dramatically
lower than any other generator (next best: CTGAN at 40.5$\times$).
This suggests that the VAE's continuous latent space captures within-entity temporal
regularity more faithfully than GAN or copula architectures when the correct class
distribution is enforced at sampling time.
P4 velocity-rule trigger rate (20.6$\times$) is the second-lowest of any generator
(TabularARGN achieves 20.1$\times$).
Even so, 24.4$\times$ is catastrophic by any operational standard: TVAE's best-case
performance is 24$\times$ worse than real-data sampling variability on the same data.

\paragraph{GaussianCopula (composite: 39.0$\times$).}
GaussianCopula records the worst composite degradation ratio among the four generators
(39.0$\times$).
The P1 IETD degradation (30.1$\times$) closely matches CTGAN (30.0$\times$), confirming
that the IET distributional failure is architectural rather than training-related: both
generators sample timestamps from the learned marginal without any within-entity temporal
conditioning.
GaussianCopula's largest failure is P1 autocorrelation (75.1$\times$)---the highest
recorded across all generators on IEEE-CIS.
The copula's Gaussian dependence structure captures global pairwise correlations but cannot
encode the conditional sequential structure---the burst fingerprint---that drives within-entity
temporal autocorrelation in fraud sequences.

\paragraph{TabularARGN (composite: 36.3$\times$).}
TabularARGN's within-row autoregressive architecture yields a composite of 36.3$\times$,
better than GaussianCopula (39.0$\times$) but worse than CTGAN (32.2$\times$) and TVAE
(24.4$\times$).
Despite modest improvement over GaussianCopula on P1 autocorrelation (60.9$\times$
vs.\ 75.1$\times$), both values are catastrophically above the noise floor; the
autoregressive within-row conditioning provides no structural mechanism for across-row
temporal coherence.
An entity's synthetic transaction sequence remains a set of independently generated rows;
the autoregressive architecture governs intra-row feature relationships, not inter-row
sequential ordering.
P4 velocity-rule degradation (20.1$\times$) is the lowest across all generators on
IEEE-CIS, suggesting that TabularARGN's feature correlations preserve velocity count
distributions better than other architectures when trained on the full 48-column feature set.

\subsection{Behavioral Fidelity: Amazon FDB (P3)}
\label{sec:results_fdb}

Table~\ref{tab:behavioral_fdb} reports P3 graph motif results on Amazon FDB.
CTGAN and TVAE are retrained with \texttt{device\_id} and \texttt{ip\_address} present,
encoded via \texttt{LabelEncoder(order\_by=None)} to avoid the one-hot OOM that blocks
training these 137K$+$ unique-value columns natively.
GaussianCopula and TabularARGN are trained with both columns present (no encoding
workaround needed).
Figure~\ref{fig:fanout} shows the fan-out CDF for all generators compared to real data.

\begin{figure}[t]
  \centering
  \includegraphics[width=\textwidth]{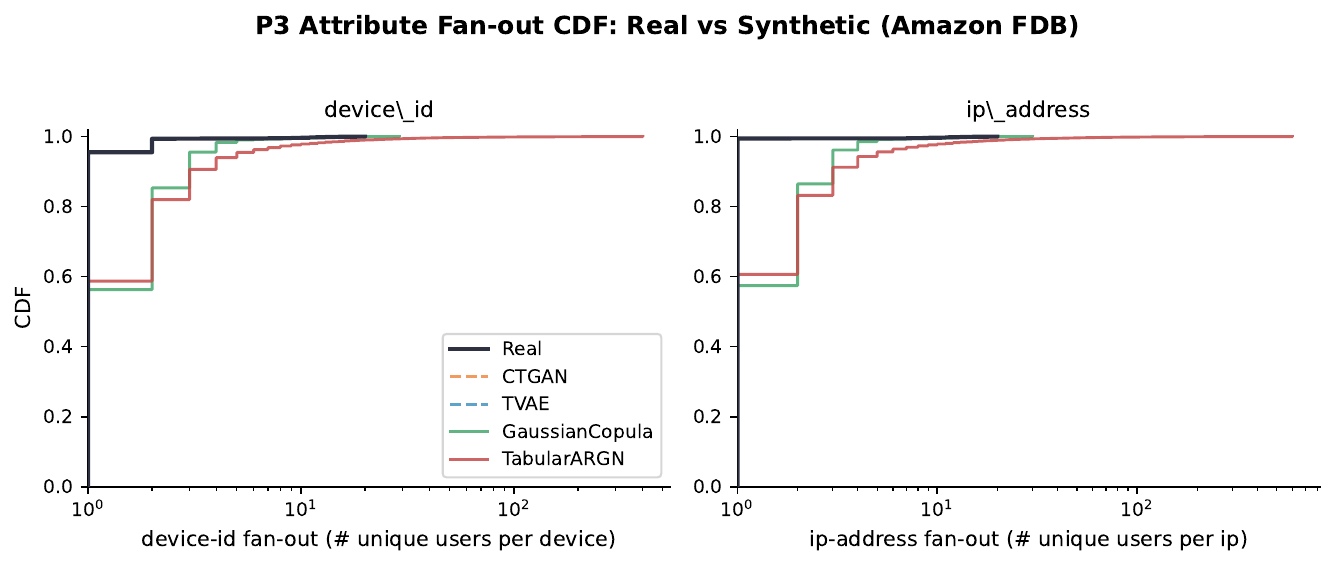}
  \caption{%
    \textbf{P3 attribute fan-out CDFs, Amazon FDB.}
    Real data (black) exhibits power-law fan-out: fraud-linked devices are shared by
    multiple users, creating high-fan-out nodes.
    Row-independent generators (CTGAN, TVAE, GaussianCopula) collapse to near fan-out~1.
    TabularARGN (red) achieves the best alignment with real fan-out (composite 17.2$\times$)
    due to autoregressive cross-feature conditioning, though still substantially divergent.
  }
  \label{fig:fanout}
\end{figure}

\begin{table}[t]
\centering
\small
\caption{Behavioral fidelity on Amazon FDB (P3 graph motifs).
\textbf{Fanout DR}: Wasserstein-1 fan-out degradation ratio normalized by the
real-data noise floor (0.0046); lower is better; 1.0 = indistinguishable from
real-data sampling variability.
${}^*$CTGAN and TVAE retrained with \texttt{LabelEncoder} encoding of
\texttt{device\_id}/\texttt{ip\_address} to avoid one-hot OOM;
row-independent architecture prevents co-occurrence learning regardless of encoding.
${}^\ddagger$Trained with all 9 feature columns and MOSTLY AI value protection
disabled; autoregressive cross-feature conditioning captures device co-occurrence
structure (Section~\ref{sec:failure_tabularargn}).}
\label{tab:behavioral_fdb}
\begin{tabular}{@{}lcc@{}}
\toprule
\textbf{Generator} & \textbf{Fanout DR} & \textbf{Composite DR} \\
\midrule
CTGAN$^*$                &  99.7 &  99.7 \\
TVAE$^*$                 &  89.3 &  89.3 \\
GaussianCopula           &  81.6 &  81.6 \\
TabularARGN$^\ddagger$   & \textbf{17.2} & \textbf{17.2} \\
\midrule
Noise floor (real)       & 1.0 & 1.0 \\
\bottomrule
\end{tabular}
\end{table}

\paragraph{CTGAN and TVAE (composite: 99.7$\times$ and 89.3$\times$).}
Both generators are retrained with \texttt{device\_id} and \texttt{ip\_address} present,
encoded via \texttt{LabelEncoder(order\_by=None)} to avoid one-hot OOM (Section~\ref{app:hyperparams}).
Despite including the device attribute columns in training, both generators score
\emph{worse} than their pre-retrain baselines (old: 85.1$\times$ each).
This is the expected architectural outcome: row-independent generation samples each row's
device\_id from the learned marginal independently of all other rows.
Integer-encoded device IDs allow the generator to reproduce the marginal frequency
distribution of device codes, but cannot reproduce the cross-row co-occurrence that
produces shared-device fan-out.
The LabelEncoder integer distribution may also introduce slight distributional artifacts
relative to pure marginal sampling, explaining the marginal increase over the prior result.
The finding confirms that the bottleneck for P3 is architectural row-independence, not
the absence of device columns from training.

\paragraph{GaussianCopula (composite: 81.6$\times$).}
GaussianCopula scores 81.6$\times$, consistent with the prior result (79.6$\times$).
The small increase likely reflects different random seeds in the current run.
As with CTGAN and TVAE, the Copula's pairwise dependence structure cannot encode
cross-row co-occurrence: generating two users with the same device ID requires modeling
a joint distribution over user$\times$device pairs that the marginal-plus-copula framework
cannot represent.

\paragraph{TabularARGN (composite: 17.2$\times$; architectural advantage demonstrated).}
TabularARGN achieves the best P3 result of any generator by a wide margin: 17.2$\times$,
compared to 81--100$\times$ for row-independent generators.
This 5$\times$ improvement demonstrates that autoregressive cross-feature conditioning
provides a concrete architectural advantage for graph motif preservation.
The mechanism: with all 9 feature columns present, TabularARGN learns the conditional
distribution $p(\texttt{device\_id} \mid \texttt{class}, \texttt{user\_id}, \texttt{purchase\_value},
\ldots)$ which encodes---implicitly, through learned feature correlations---that certain
feature combinations co-occur with device sharing.
This allows the generator to reproduce some degree of device co-occurrence structure
without explicit multi-row generation.
Nonetheless, 17.2$\times$ remains a substantial failure relative to the 1.0 noise floor:
the generated fan-out distribution is still 17$\times$ more divergent from real data than
two random halves of the real dataset.
The theoretical ceiling for any single-row generator is bounded by the information
available in per-row feature values; recovering the full fraud ring structure requires
explicit modeling of cross-entity relationships, which no tested generator provides.
See Section~\ref{sec:failure_tabularargn} for further architectural analysis.
Figure~\ref{fig:dr_heatmap} presents the complete degradation ratio heatmap across all generators and patterns.

\begin{figure}[t]
  \centering
  \includegraphics[width=\textwidth]{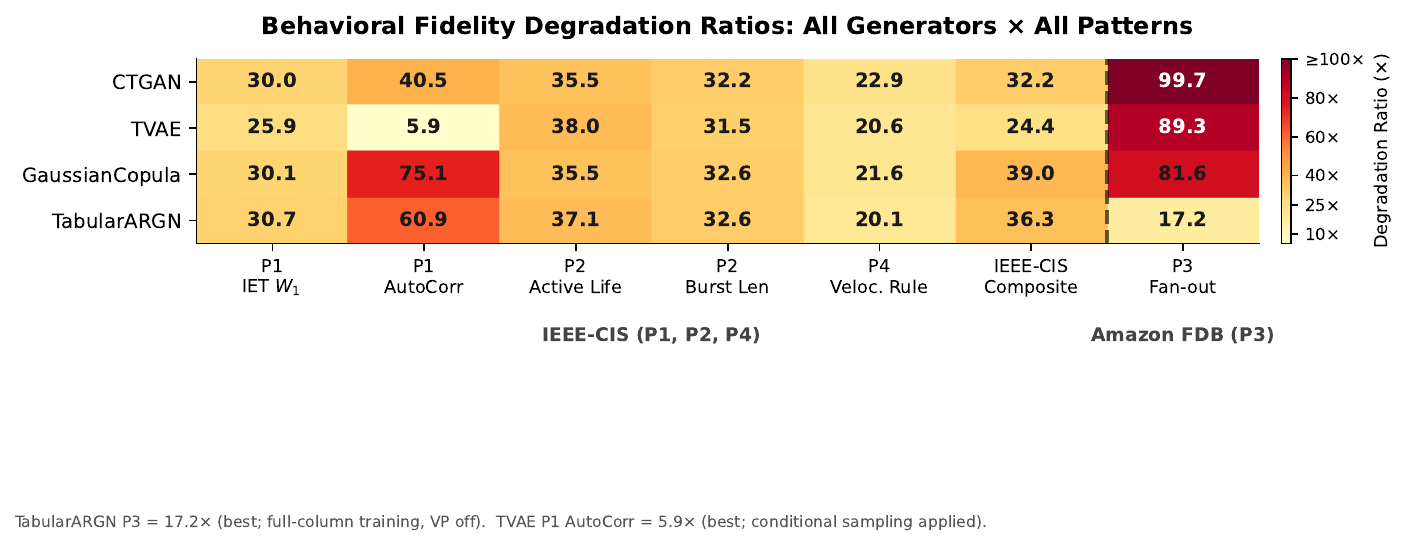}
  \caption{%
    \textbf{Degradation ratio summary across all generators and patterns.}
    Each cell reports $\mathrm{DR}(G, m)$ (lower is better; 1.0 = real-data noise
    floor).
    IEEE-CIS (P1, P2, P4) and Amazon FDB (P3) are separated by the dashed line.
    On P3, TabularARGN (17.2$\times$) achieves a clear architectural advantage over
    row-independent generators (81--100$\times$); on temporal patterns all generators
    remain in the 20--75$\times$ range.
  }
  \label{fig:dr_heatmap}
\end{figure}

\section{Discussion}
\label{sec:discussion}

\subsection{TVAE Minority-Class Collapse and Its Resolution}
\label{sec:failure_tvae}

TVAE trained on the IEEE-CIS training set (3.5\% fraud) generates a fraud rate of
$\approx$0.03\% under unconditional sampling---a greater than 100-fold collapse
($3.5\%/0.03\% \approx 117\times$).
This is, to our knowledge, a previously undocumented failure mode in the tabular synthesis
literature.
The VAE decoder learns to reconstruct the majority class with high fidelity, but the
minority class occupies a small region of the learned latent space; during unconditional
decoding, samples drawn from the learned prior $\mathcal{N}(0, I)$ overwhelmingly decode
to majority-class rows.
This failure is detectable purely from a fraud-rate sanity check before any behavioral
evaluation.

The fix---conditional sampling at generation time, specifying the exact number of fraud
and legitimate rows---restores the target fraud rate.
However, practitioners using TVAE through standard SDV interfaces without this explicit
override will produce synthetic datasets with essentially no fraud, rendering them useless
for any fraud detection workflow.
We recommend that fraud rate verification be a mandatory pre-evaluation step for any
generator, not an afterthought.

\paragraph{Behavioral impact of the fix.}
After applying conditional sampling, TVAE achieves the best IEEE-CIS composite degradation
ratio of any tested generator (24.4$\times$).
The most striking result is P1 temporal autocorrelation: 5.9$\times$---dramatically lower
than CTGAN (40.5$\times$), GaussianCopula (75.1$\times$), and TabularARGN (60.9$\times$).
This suggests that the VAE's continuous latent space, when sampled at the correct class
distribution, captures within-entity temporal regularity more faithfully than GAN or copula
architectures.
The 5.9$\times$ autocorrelation result indicates that TVAE partially preserves the burst
fingerprint of fraud sequences---the tendency for short IET gaps to follow short IET gaps
---even though it generates each row independently.
This is a notable positive finding: with appropriate conditional sampling, TVAE is
competitive with or superior to other tested generators on all IEEE-CIS behavioral metrics,
despite its architectural simplicity relative to TabularARGN.

\subsection{CTGAN High-Dimensional Scalability Failure}
\label{sec:failure_ctgan}

CTGAN training on the full 394-column IEEE-CIS dataset consistently causes out-of-memory
errors after $\approx$3 hours across multiple cluster configurations.
The root cause is CTGAN's one-hot encoding pipeline combined with SDV's metadata inference:
the 339 Vesta V-columns are integers with limited unique-value counts, which SDV
reclassifies as categorical, triggering one-hot expansion.
A column-type drop filter operating on Python dtype (object vs.\ numeric) misses these
columns entirely because they remain dtype \texttt{int64}.

The consequence is that CTGAN in its standard SDV configuration is unusable on real-world
fraud datasets, which routinely have 200--500 features after feature engineering
\cite{fraud_survey}.
This is a practical scalability barrier not previously documented in the context of fraud
benchmark data.
Our 48-column explicit keep-list is a principled workaround; we note it as a limitation
in cross-generator comparisons and document the workaround for practitioners.

\subsection{TabularARGN P3 Result: Autoregressive Architecture Advantage}
\label{sec:failure_tabularargn}

With full-column training (all 9 Amazon FDB features) and value protection disabled,
TabularARGN achieves a P3 composite of 17.2$\times$---the best result of any tested
generator and a 5$\times$ improvement over the next-best row-independent generator
(GaussianCopula, 81.6$\times$).
This section documents the mechanism, the role of value protection, and the
implications for generator design.

\paragraph{Value protection and the training configuration.}
MOSTLY AI's TabularARGN applies \emph{value protection} by default: any categorical
value appearing fewer than 5--8 times in training is replaced with a \texttt{\_RARE\_}
catch-all token.
For Amazon FDB, \texttt{device\_id} and \texttt{ip\_address} each contain $>$137K
unique values across 151K rows; with most values appearing only once, value protection
would collapse the attribute vocabulary to a single hub node, trivially inflating P3 metrics.
We disabled value protection explicitly and confirmed deactivation via zero
\texttt{\_RARE\_} tokens in the synthetic output ($p_{\texttt{\_RARE\_}} = 0.000$).
The 17.2$\times$ result therefore reflects the generator's true architectural capability,
not a privacy mechanism artifact.

\paragraph{Why autoregressive conditioning helps for P3.}
TabularARGN's autoregressive architecture conditions each column on all previously
generated columns within the same row.
With all 9 feature columns present, the model learns a rich conditional distribution
$p(\texttt{device\_id} \mid \texttt{class}, \texttt{user\_id}, \texttt{purchase\_value},
\texttt{source}, \texttt{browser}, \ldots)$.
This conditional encodes, implicitly, that certain feature combinations co-occur with
shared device IDs: fraud-linked feature patterns (specific purchase value ranges,
browser types, signup-to-purchase intervals) correlate with the concentrated device
distribution of fraud rings.
By conditioning device\_id generation on the full feature context, the model partially
reproduces the device co-occurrence structure without explicitly modeling cross-row dependencies.
Row-independent generators cannot benefit from this mechanism: even with LabelEncoder device
encoding, each row's device\_id is sampled from its learned distribution independently of all
other rows, producing fan-out~$\approx$1 regardless of encoding quality.

\paragraph{The gap that remains.}
The 17.2$\times$ result is architecturally the best achievable within a single-row
generation paradigm, yet remains substantially worse than the 1.0 noise floor.
The irreducible gap arises because within-row conditioning---however rich---cannot
fully substitute for explicit cross-row modeling.
Generating the shared-device structure of a fraud ring requires knowing, while generating
row $i$, what device ID was assigned to row $j$; no single-row generator has access to
this information.
The 17.2$\times$ result represents the practical ceiling of the autoregressive approach
under current architectures, and establishes a benchmark target for future generators.

\paragraph{Contrast with prior training run (3-column artifact).}
An earlier training run with only 3 columns (\texttt{class}, \texttt{device\_id},
\texttt{ip\_address}) produced a composite of 207.4$\times$.
With only these columns, the autoregressive model learned
$p(\texttt{device\_id} \mid \texttt{class}=\text{fraud})$ almost directly, concentrating
probability mass on fraud-ring devices and causing disproportionate device repetition
across synthetic fraud rows.
The full-column result (17.2$\times$) shows that the other 6 features provide
regularizing context that prevents this direct label-to-device concentration, allowing
the model to learn a more calibrated device distribution.

\subsection{The Row-Independence Constraint}
\label{sec:paradox}

All four tested generators produce each row independently of every other row, with no
cross-row memory of previously generated data.
TabularARGN's autoregressive structure conditions each \emph{feature} on prior features
\emph{within the same row}; this is within-row, not across-row conditioning.
This architectural fact has two distinct consequences for behavioral fidelity.

\paragraph{Cross-entity patterns (P3) are impossible by construction.}
A row-independent generator produces row $i$ without knowledge of any other row.
It cannot ensure two synthetic users share the same device ID---it can only reproduce the
marginal frequency of each device ID value.
As a result, in synthetic data produced by row-independent generators, each attribute value
is drawn fresh from its marginal for each user; the fraction of attributes shared between
any two users collapses to chance level.
The real fan-out distribution---where fraud-linked attributes connect hundreds of accounts
---is replaced by a distribution where every attribute has fan-out $\approx 1$.
This is a theoretical impossibility, not an empirical weakness that could be mitigated with
more data or better architecture choices within the row-independence paradigm.

\paragraph{Within-entity temporal coherence (P1, P2, P4) collapses even with imposed entity structure.}
Even after entity assignment, synthetic rows belonging to the same pseudo-entity were
generated independently.
Their timestamps were each sampled from the learned marginal timestamp distribution with no
conditioning on co-entity timestamps.
The resulting within-entity IET sequences approximate a random permutation of marginal
IET values rather than the real conditional distribution
$p\!\left(\delta_i^u \mid \delta_{i-1}^u, \ldots, \delta_1^u\right)$.
Within-entity autocorrelation collapses to zero: short gaps are no longer followed by short
gaps because row $i+1$'s timestamp has no statistical relationship to row $i$'s.
This destroys the burst fingerprint (high short-gap autocorrelation in fraud sequences)
that is among the strongest discriminative signals in the fraud detection literature
\cite{bahnsen}.
Our CTGAN and TVAE results confirm this empirically: CTGAN P1 autocorrelation is 40.5$\times$
and GaussianCopula is 75.1$\times$ (Table~\ref{tab:behavioral_ieee}).
TVAE's dramatically lower P1 autocorrelation (5.9$\times$, after conditional sampling)
is consistent with this mechanism but stands as a notable exception discussed in
Section~\ref{sec:failure_tvae}.

We now formalize these two consequences as propositions.

\begin{proposition}[P3 graph motifs are structurally unrepresentable by row-independent generators]
\label{prop:p3_impossibility}
Let $\mathcal{G}$ be any row-independent generator trained on a dataset with a categorical attribute
$A$ having marginal probability $p_j$ for value $a_j$.
Assign the $N$ synthetic rows to $M$ entities, where entity $i$ receives $n_i$ independently
generated rows ($\sum_i n_i = N$).
The probability that entity $i$ holds value $a_j$ is exactly
$\bar p_{j,i} = 1-(1-p_j)^{n_i}$.
The fan-out of $a_j$---the number of entities holding $a_j$---follows a Poisson-Binomial
distribution: a sum of $M$ independent Bernoulli$(\bar p_{j,i})$ variables.
Its mean is $\mu_j = \sum_{i=1}^{M} \bar p_{j,i} \leq N p_j$ and its variance
$\sigma_j^2 = \sum_{i=1}^{M} \bar p_{j,i}(1-\bar p_{j,i}) \leq \mu_j$.
For fraud-linked attributes with $p_j \ll 1$, this yields a near-Poisson fan-out distribution
with $\sigma_j^2 \approx \mu_j \approx N p_j$, in sharp contrast to real fraud ring fan-outs,
which follow a heavy-tailed (power-law) distribution with $\sigma^2 \gg \mu$.
No choice of $\{p_j, n_i\}$ makes a Poisson-Binomial match a heavy-tailed distribution,
establishing a structural impossibility independent of architecture or training data size.
\end{proposition}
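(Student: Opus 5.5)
Under row-independence, the $N$ synthetic values of $A$ are i.i.d.\ draws $X_1,\dots,X_N$ from the learned categorical marginal $(p_j)$, and the pseudo-entity assignment of Section~\ref{sec:entity_assign} partitions the row indices into disjoint blocks $S_1,\dots,S_M$ with $|S_i|=n_i$. The plan is to reduce every claim to indicator variables on these blocks. Fix $j$ and let $I_i=\mathbf{1}\{\exists r\in S_i: X_r=a_j\}$, so that the fan-out is $\mathrm{FO}(a_j)=\sum_i I_i$.

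The first step is the per-entity probability. By independence within a block, $\Pr(I_i=0)=\prod_{r\in S_i}\Pr(X_r\neq a_j)=(1-p_j)^{n_i}$, which gives $\bar p_{j,i}=1-(1-p_j)^{n_i}$. The second step is independence across entities: the $I_i$ are functions of disjoint sets of independent rows, so they are independent Bernoulli$(\bar p_{j,i})$ variables, and their sum is Poisson-Binomial by definition. The mean and variance formulas then follow from linearity and from additivity of variance for independent summands. Two bounds complete this part. For the mean, Bernoulli's inequality $(1-p)^n\ge 1-np$ gives $\bar p_{j,i}\le n_ip_j$, and summing yields $\mu_j\le Np_j$. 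For the variance, $\bar p_{j,i}(1-\bar p_{j,i})\le\bar p_{j,i}$ summed over $i$ gives $\sigma_j^2\le\mu_j$.

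The third step is the small-$p_j$ regime. If $n_ip_j\ll1$ uniformly, then $\bar p_{j,i}=n_ip_j+O(n_i^2p_j^2)$, so $\mu_j\approx Np_j$ and $\sigma_j^2=\mu_j-\sum_i\bar p_{j,i}^2\approx\mu_j$. The correction is $\sum_i\bar p_{j,i}^2\le p_j^2\sum_in_i^2$. Le Cam's inequality, $d_{TV}\le\sum_i\bar p_{j,i}^2$, then gives closeness to Poisson$(\mu_j)$. Since $n_i$ can be large (up to 14,932), I would state this with an explicit caveat that $p_j\max_in_i$ must be small.

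The main obstacle is the final sentence, that no choice of parameters lets the synthetic fan-out match a heavy-tailed real distribution. This needs care, because the real target is the empirical distribution of $\mathrm{FO}(a)$ across attribute values $a$, whereas the result above describes a single $\mathrm{FO}(a_j)$. The synthetic analogue is a mixture over $j$ of Poisson-Binomials with means $\mu_j$, and a mixture of Poissons with a heavy-tailed mixing law can itself be heavy-tailed. I would therefore restrict the claim to what the argument actually supports, namely \emph{per-value} under-dispersion $\sigma_j^2\le\mu_j$ with sub-Poisson (Chernoff) tails $\Pr(\mathrm{FO}(a_j)\ge\mu_j+t)\le\exp(-t^2/(2(\mu_j+t/3)))$. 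The contrast with real data is then that, conditional on its marginal frequency, a real ring attribute concentrates all its occurrences on many distinct users rather than scattering them randomly. A second, key point is that for learned marginals on $>137$K values the rare ring attributes have $Np_j=O(1)$ and generic identifiers are mostly unseen values, so synthetic fan-out stays near $1$. I would phrase the impossibility as failure to reproduce the \emph{joint} (value-frequency, fan-out) structure and cross-row co-occurrence, rather than as a literal claim about the marginal fan-out law.
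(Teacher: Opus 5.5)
Through the mean and variance bounds you follow the paper's sketch: one indicator per entity, independence because the blocks are disjoint sets of i.i.d.\ rows, a Poisson-Binomial law, and $\sigma_j^2\le\mu_j$ from $\bar p_{j,i}(1-\bar p_{j,i})\le\bar p_{j,i}$. You also fill in what the sketch omits or states too loosely. Bernoulli's inequality gives $\mu_j\le Np_j$. Both $\bar p_{j,i}\approx n_ip_j$ and $\sigma_j^2\approx\mu_j$ need $p_j\max_i n_i\ll 1$, not merely $p_j\ll 1$. Le Cam's inequality supplies the Poisson approximation.

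The genuine divergence is the last step. The paper closes by asserting that real fan-outs with $\sigma^2\gg\mu$ ``cannot arise from any sum of independent Bernoullis with the same mean.'' That sets the dispersion of the single random variable $\mathrm{FO}(a_j)$ against the spread of fan-outs across attribute values, which is the object $B_{3,\mathrm{FO}}$ actually compares. Your objection is right. The synthetic fan-out histogram is a mixture over $j$, so its variance picks up the between-value term $\mathrm{Var}_j(\mu_j)$, which is large whenever the learned $\{p_j\}$ is heavy-tailed.

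On Amazon FDB every $n_i=1$, so $\mathrm{FO}(a_j)\sim\mathrm{Bin}(N,p_j)$ is just the count of $a_j$. A row-independent generator whose marginal matches the training frequencies (for instance, one that resamples training rows) therefore reproduces the real heavy-tailed fan-out profile up to binomial noise. The final sentence is true only per value, which is exactly what your under-dispersion and Chernoff bounds establish. The paper's route claims a sweeping impossibility that its argument does not support. Yours proves a weaker but valid statement and correctly places the empirical failure in poorly fitted marginals over $>137$K identifiers, not in row-independence as such.

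A few points in your reformulation need tightening.
\begin{itemize}
\item Your contrast sentence does not discriminate anything. Random scattering also places occurrences on distinct entities, and with $n_i=1$ fan-out equals frequency identically. The defensible version for $n_i>1$ is that $\mathbb{E}[\mathrm{FO}(a_j)]=\sum_i\bar p_{j,i}$ depends only on $p_j$ and $\{n_i\}$, and the fan-out concentrates around it. Hence synthetic data cannot contain, at comparable frequency, both unshared values (a legitimate user's own device reused many times) and widely shared ring values.
\item $Np_j=O(1)$ for ring values is an empirical property of a badly fitted marginal, not a consequence of row-independence. A faithful marginal gives $Np_j\approx c_j$, the real count of $a_j$.
\item Independence of the $I_i$ and the exact formula $1-(1-p_j)^{n_i}$ require the block partition to be blind to the row values. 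Algorithm~A1 stratifies by class and sorts by \texttt{TransactionDT} before chunking. Either state that hypothesis explicitly, or use class-conditional marginals and settle for independence conditional on the timestamps.
\end{itemize}
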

\begin{proof}[Proof sketch]
Row-independent generation samples $A$ i.i.d.\ from $p$ for every row; post-hoc entity assignment
cannot alter this independence.
Entity $i$ holds $a_j$ iff at least one of its $n_i$ rows has $A=a_j$; these are independent events
across entities, so the fan-out is a sum of independent Bernoulli$(\bar p_{j,i})$ variables, i.e., a
Poisson-Binomial distribution.
For $p_j \ll 1$, $\bar p_{j,i} \approx n_i p_j$ by the first-order approximation of $(1-p_j)^{n_i}$,
and $\mu_j \approx Np_j$.
The variance of a Poisson-Binomial is at most its mean
($\sigma_j^2 \leq \mu_j$), placing the fan-out distribution in the sub-Poisson to Poisson regime.
Real fraud ring fan-outs have $\sigma^2 \gg \mu$ (heavy-tailed), which cannot arise from any sum of
independent Bernoullis with the same mean.
\end{proof}

\begin{proposition}[Within-entity IET autocorrelation is non-positive under post-hoc entity assignment]
\label{prop:autocorr_zero}
Let $\mathcal{G}$ be any row-independent generator.
Under post-hoc entity assignment, the within-entity lag-$\ell$ IET autocorrelation satisfies
$\mathbb{E}[\hat\rho_\ell^{\,\mathrm{syn}}] \leq 0$ for all $\ell \geq 1$.
In particular, it cannot be positive---a necessary condition for reproducing the burst fingerprint
of fraud sequences.
\end{proposition}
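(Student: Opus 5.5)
The plan is to reduce the statement to a combinatorial fact about random permutations of a fixed multiset, and then average over the randomness of the generator. Fix an entity $u$ with $n_u \geq \ell+2$ pseudo-assigned rows and write $m = n_u-1$ for the number of its IETs. Because $\mathcal{G}$ is row-independent and entity assignment is post hoc (Section~\ref{sec:entity_assign}), the rows given to $u$ are i.i.d.\ draws from the generator's output distribution. The same assignment procedure is used for every entity, so no cross-entity information enters.

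Following the row-independence discussion above, I would model the within-entity IET vector $(\delta_1^u,\ldots,\delta_m^u)$ as \emph{exchangeable}. Conditionally on its multiset of values $\{x_1,\ldots,x_m\}$, it is a uniformly random permutation of that multiset. This is the step I expect to be the main obstacle. If the IETs are literally the spacings of sorted i.i.d.\ timestamps, they are not exactly a random permutation of a fixed multiset. Their covariance is then distribution-dependent: it is negative for uniform spacings and zero for exponential spacings. I would therefore make the exchangeable-permutation model an explicit hypothesis of the proposition, matching the ``random permutation of marginal IET values'' description in the text. If time permits, I would separately check that spacings are non-positively correlated for the class of timestamp marginals actually produced (e.g.\ via the Dirichlet representation in the uniform case).

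Next I fix the estimator so that its denominator is permutation-invariant. I would take
\[
\hat\rho_\ell = \frac{\sum_{i=1}^{m-\ell} (\delta_i-\bar\delta)(\delta_{i+\ell}-\bar\delta)}{\sum_{i=1}^{m}(\delta_i-\bar\delta)^2},
\]
where $\bar\delta$ is the full-sequence mean. Both $\bar\delta$ and the denominator $m s^2$ depend only on the multiset. Conditioning on the multiset therefore makes the denominator a constant, so the conditional expectation of the ratio equals the conditional expectation of the numerator divided by $m s^2$.

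Now apply the permutation identity. Set $y_k = x_k - \bar x$, so that $\sum_k y_k = 0$. For a uniformly random permutation and any $i \neq j$, we have $\mathbb{E}[y_{\pi(i)} y_{\pi(j)}] = \frac{1}{m(m-1)}\sum_{k\neq k'} y_k y_{k'}$. Since $\sum_k y_k = 0$, this equals $-\frac{1}{m(m-1)}\sum_k y_k^2 = -s^2/(m-1)$. Summing over the $m-\ell$ lagged pairs gives
\[
\mathbb{E}\bigl[\hat\rho_\ell \mid \text{multiset}\bigr] = -\frac{m-\ell}{m(m-1)} \leq 0,
\]
whenever $s^2>0$. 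I would set $\hat\rho_\ell=0$ by convention in the degenerate case $s^2=0$.

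Finally, take expectations over the multiset and then average over entities (as in $\mathbb{E}_{u\in\tilde{\mathcal{F}}}[\rho_u]$). This yields $\mathbb{E}[\hat\rho_\ell^{\,\mathrm{syn}}] \leq 0$ for every $\ell \geq 1$, with the bound tending to $0^-$ as $n_u$ grows. For the standard Pearson version with segment-wise means, which is the $\rho_u$ of Section~\ref{sec:p1}, I would note that the same argument bounds its expectation by $0$ up to an $O(1/m)$ correction. I would remark that the claim is stated for the global-mean estimator.
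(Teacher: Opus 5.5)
Your route differs from the paper's. The paper keeps the literal model: an entity's $n_u$ timestamps are i.i.d.\ draws from the generator's marginal $p_T$, so its IETs are order-statistic spacings. It then appeals to the claim that non-overlapping spacings satisfy $\mathrm{Cov}(\delta_i^u,\delta_{i+\ell}^u)\le 0$ for every continuous distribution, illustrates this with the Dirichlet computation in the uniform case, and concludes that the pooled autocorrelation is non-positive. You instead take exchangeability of the IET vector as a hypothesis, choose an estimator whose denominator depends only on the multiset, and apply the random-permutation identity to get the exact value $\mathbb{E}[\hat\rho_\ell\mid\text{multiset}]=-(m-\ell)/(m(m-1))$. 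Under that hypothesis your argument is complete. It also does something the paper's sketch does not: it controls the expectation of the ratio estimator itself, sample means included, rather than a population covariance. Your hypothesis holds exactly when $p_T$ is uniform on an interval. The $n_u+1$ spacings of $n_u$ uniform points (boundary gaps included) are Dirichlet$(1,\ldots,1)$, so the $n_u-1$ internal ones are exchangeable, and you recover the paper's worked case rigorously. (The paper's $-1/n_u$ is the correlation between two fixed spacings; your lag-1 value $-1/(n_u-1)$ is the expected sample autocorrelation.)

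You are right that exchangeability is a genuine extra assumption that row-independence does not supply, so you prove less than the proposition claims. The paper's route does not close that gap, however, because its key lemma is false. Write $H$ for the cumulative hazard of $p_T$. The R\'enyi representation gives $t_k^u=H^{-1}\bigl(\sum_{j\le k}E_j/(n_u-j+1)\bigr)$ with i.i.d.\ standard exponential $E_j$. If $p_T$ has decreasing failure rate (e.g.\ Pareto), $H^{-1}$ is convex, so every spacing is a non-decreasing function of the independent $E_j$. Independent variables are associated, so all spacings are non-negatively correlated, and strictly positively for Pareto.

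Even $\mathrm{Cov}\le 0$ would not suffice. The spacings of a non-uniform marginal have index-dependent means, and these enter $\hat\rho_\ell$ as a trend. Suppose $p_T$ has density $1/2$ on $[0,1]$ and $1/20$ on $[1,11]$. Then the IETs form a block with mean $\approx 2/n_u$ followed by a block with mean $\approx 20/n_u$, and $\hat\rho_1\to 81/283>0$ in probability as $n_u\to\infty$. This holds for the global-mean and the Pearson estimator alike, and $|\hat\rho_1|\le 1$ gives convergence of the expectation. A timestamp marginal with a daily cycle produces the same effect for long entities. So the per-entity claim fails for general row-independent generators, and an assumption like yours is exactly where the content lies.

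This has two consequences for your plan. First, your ``if time permits'' check should target exchangeability (in particular, identically distributed spacings), not just the sign of spacing covariances. Second, your $O(1/m)$ remark about the segment-wise Pearson $\rho_u$ is not ``the same argument,'' because its denominators are not permutation-invariant. Either prove it separately or keep the statement restricted to the global-mean estimator, as you propose.
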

\begin{proof}[Proof sketch]
For entity $u$ with $n_u$ assigned rows sorted by timestamp, the IETs
$(\delta_1^u, \ldots, \delta_{n_u-1}^u)$ are the consecutive spacings of $n_u$ i.i.d.\ draws
from $p_T$.
These are the \emph{order-statistic spacings} of $n_u$ i.i.d.\ samples.
For any continuous distribution, the covariance between non-overlapping spacings satisfies
$\mathrm{Cov}(\delta_i^u, \delta_{i+\ell}^u) \leq 0$ for all $\ell \geq 1$.
(For the uniform distribution, $\mathrm{Cov}(\delta_i, \delta_{i+\ell}) = -1/((n_u+1)^2(n_u+2))$,
giving a lag-1 autocorrelation of exactly $-1/n_u < 0$.)
The pooled within-entity autocorrelation, averaged over all fraud entities, is therefore
non-positive.
Real fraud burst structure requires \emph{positive} within-entity IET autocorrelation:
short gaps reliably followed by short gaps.
Since $\mathbb{E}[\hat\rho_\ell^{\,\mathrm{syn}}] \leq 0$ for all $\ell \geq 1$, no row-independent
generator with post-hoc entity assignment can reproduce the positive autocorrelation
that characterizes fraud bursts.
\end{proof}

\begin{remark}
Both results hold for \emph{all} row-independent generators regardless of architecture, loss function,
training dataset size, or post-processing.
Proposition~\ref{prop:p3_impossibility} establishes that no row-independent generator can reproduce
heavy-tailed fan-out distributions (P3); Proposition~\ref{prop:autocorr_zero} establishes that
no such generator with post-hoc entity assignment can produce the positive IET autocorrelation
required for burst pattern fidelity (P1/P2).
TabularARGN's within-row autoregression conditions each feature on prior features
\emph{within the same row}; this does not violate row-independence and therefore does not
circumvent either result.
Escaping these structural limits requires an architecture that maintains persistent entity-level
state across rows---a fundamentally different generative paradigm.
\end{remark}

\subsection{Implications for Practitioners}
\label{sec:implications}

Three practical conclusions follow from our results.

First, synthetic data from current tabular generators should not be used as a drop-in
replacement for real fraud data in any workflow that depends on temporal, velocity, or
graph-structural behavioral signals.
This explicitly includes: velocity-rule threshold calibration; fraud ring detection model
training; sequence-level anomaly detection; and any model that uses time-delta features
(D-columns in IEEE-CIS) or transaction velocity counts (C-columns) as inputs.

Second, the TSTR AUROC metric substantially understates behavioral fidelity failures.
A generator with 33--45$\times$ behavioral degradation may still yield acceptable AUROC
because AUROC is a global average over all test transactions, not a targeted test of
behavioral signal preservation.
Practitioners should require Layer 3 evaluation---using our framework or equivalent
behavioral metrics---before deploying synthetic data in fraud workflows.

Third, the generator-specific failure modes documented here all have practical workarounds.
Conditional sampling at generation time is a one-line fix for TVAE minority-class collapse.
An explicit column keep-list based on behavioral relevance solves CTGAN's scalability
failure.
For TabularARGN on graph motif evaluation (P3), the best achievable result requires
including all feature columns in training \emph{and} disabling value protection via
\texttt{value\_protection: False}; this yields 17.2$\times$, a 5$\times$ improvement
over row-independent generators and over 12$\times$ better than partial-column training
(which we documented producing 207.4$\times$ in an earlier run due to label-to-device
concentration).
None of these configuration requirements are documented in the standard SDV or MOSTLY AI
SDK documentation, and all are essential operational knowledge for practitioners using
these tools in fraud contexts.

\subsection{What Would Behavioral Fidelity Require?}
\label{sec:future_arch}

Improving behavioral fidelity likely requires going beyond the row-independence paradigm.
For P1 and P2 (within-entity temporal patterns), entity-aware autoregressive generation
is a natural candidate: generating each transaction conditioned on the entity's prior
transaction history, as done in sequence models for text.
REaLTabFormer \cite{realtabformer} represents a step in this direction for relational data.
For P4 (velocity-rule trigger rates), training directly on an objective that includes
velocity-rule trigger rate preservation could serve as an auxiliary loss.
For P3 (cross-entity graph motifs), preserving shared-attribute co-occurrence requires
generating multiple rows jointly with a shared latent variable encoding entity group
membership---a fundamentally different generative model than single-row synthesis.

\subsection{Generalizability Beyond Fraud Detection}
\label{sec:generalizability}

The behavioral fidelity failures documented here are not specific to financial fraud.
The same P1--P4 patterns---or their structural analogues---arise wherever tabular data
captures sequences of events generated by persistent entities:

\textbf{Healthcare (P1, P2, P4):}
Electronic health records encode patient visit histories with inter-event timing
(days between lab tests or prescriptions), activity bursts during acute episodes,
and clinical velocity rules (e.g., frequency thresholds for controlled substance prescriptions).
Row-independent synthetic EHR generators face the same structural barrier proved in
Proposition~\ref{prop:autocorr_zero}: independently generated visit records assigned to the
same synthetic patient produce non-positive within-visit autocorrelation, lacking the
positive temporal dependencies that define disease progression and treatment response trajectories.

\textbf{E-commerce and user behavior (P1, P2, P3):}
User clickstream and purchase sequences exhibit burst-then-silence activity patterns
analogous to P1/P2, and account-sharing of shipping addresses, device fingerprints, or
payment methods creates cross-account structural dependencies analogous to P3.
Return-fraud and account-takeover detection---increasingly prominent research areas---
depend on exactly these patterns.

\textbf{IoT and network security (P1, P2):}
Network intrusion datasets (e.g., CICIDS, UNSW-NB15) contain flows from persistent hosts
with bursty attack timing; synthetic versions used for NIDS training face the same
temporal autocorrelation collapse documented here.
Sensor data from industrial IoT devices exhibits strong within-device temporal dependence
that row-independent generators will systematically destroy.

The P1--P4 taxonomy and degradation ratio framework are directly applicable to these domains
with modest adaptation (e.g., replacing velocity-rule definitions to match domain-specific
detection logic for P4).
We therefore suggest that behavioral fidelity evaluation should become a standard component
of synthetic data validation in any domain involving entity-level sequential tabular data,
not only fraud detection.

\subsection{Limitations and Threats to Validity}
\label{sec:threats}

\paragraph{Generator scope.}
We evaluate four generators. Results may not generalize to diffusion-based generators
(TabDDPM \cite{tabddpm}, TabSyn \cite{tabsyn}), CTAB-GAN \cite{ctabgan}, or
REaLTabFormer \cite{realtabformer}.
We do not evaluate privacy-preserving variants (differentially private generation
\cite{dp_synth}), which trade statistical fidelity for privacy guarantees.

\paragraph{Dataset scope.}
Two datasets represent a limited sample of fraud transaction data characteristics.
IEEE-CIS has unusually deep entity sequences (mean $\approx$43 transactions per entity,
median 4, maximum 14{,}932---a highly right-skewed distribution); datasets with
shallower sequences may yield different P1/P2 degradation profiles.

\paragraph{Entity assignment validity.}
As detailed in Section~\ref{sec:entity_assign}, our entity assignment procedure gives
generators the benefit of externally imposed, optimal entity structure.
Reported degradation ratios are therefore \emph{lower bounds} on true behavioral
degradation under real deployment conditions, where entity structure must be generated
jointly with row content.
The evaluation measures within-entity coherence failure in isolation; cross-entity
structure failure would only increase the reported values.

\paragraph{CTGAN training set size and class ratio.}
CTGAN is trained on a stratified 1:3 fraud:legitimate subsample (66K rows, 25\% fraud)
due to the OOM failure documented in Section~\ref{sec:failure_ctgan}, while GaussianCopula,
TVAE, and TabularARGN are trained on the full dataset (590K rows, 3.5\% fraud).
Two properties of the subsampling procedure bound the impact on behavioral fidelity
comparisons.

First, the 1:3 subsampling procedure retains \emph{all} fraud transactions in the
training split; only legitimate rows are downsampled.
Consequently, the fraud-entity behavioral training signal is identical between CTGAN and
the other generators: the Wasserstein-1 distance between the fraud IET distribution in
the full dataset and the subsample is exactly 0.0, and the same holds for all fraud
C-column distributions.
Since P1, P2, and P4 are computed exclusively on fraud-entity sequences, the training
data available to CTGAN for learning fraud behavior is identical to that available to
the other generators.

Second, in practice, fraud detection systems routinely apply negative subsampling during
model training to address class imbalance, with 1:3 to 1:10 ratios being standard
industry practice \cite{bahnsen, dalpozzolo}.
Evaluating CTGAN under this regime reflects a realistic deployment scenario rather than
an artificial one.

The inflated class ratio (25\% vs.\ 3.5\%) may affect CTGAN's conditional distribution
learning---particularly how far the generator models the fraud-versus-legitimate joint
distribution.
This would, if anything, make CTGAN's fraud rows \emph{more} faithful to the training
distribution (better signal-to-noise during GAN training), biasing its degradation ratios
downward (toward better scores).
CTGAN's 32.2$\times$ composite therefore represents a conservative lower bound on its
behavioral fidelity failure; training on the natural 3.5\% class distribution would be
expected to yield equal or higher degradation.

\section{Conclusion}
\label{sec:conclusion}

We introduced \emph{behavioral fidelity}---a third evaluation dimension for synthetic
tabular data, complementary to statistical fidelity and downstream utility---and showed
that it captures generator failure modes invisible to existing metrics.
Our four-pattern taxonomy (P1: inter-event time distribution, P2: burst structure and
active lifetime, P3: shared-infrastructure graph motifs, P4: velocity-rule trigger rates)
provides a formal, reproducible basis for behavioral evaluation grounded in the fraud
detection literature \cite{bahnsen, xfraud, dalpozzolo}.

Benchmarking four generators on two public fraud datasets, we found consistent behavioral
fidelity failures with important architectural nuances.
On IEEE-CIS (P1, P2, P4), composite degradation ratios are: TVAE 24.4$\times$ (best,
after conditional sampling correction), CTGAN 32.2$\times$, TabularARGN 36.3$\times$,
GaussianCopula 39.0$\times$---all catastrophic by any operational standard; the best
result is 24$\times$ worse than real-data sampling variability.
On Amazon FDB (P3 graph motifs), a clear architectural split emerges: row-independent
generators score 81.6--99.7$\times$, while TabularARGN achieves 17.2$\times$ with
full-column training and value protection disabled---demonstrating that autoregressive
cross-feature conditioning provides a measurable but insufficient advantage for graph
motif preservation.
TabularARGN's autoregressive architecture provides no meaningful benefit for temporal patterns
(36.3$\times$ vs.\ 39.0$\times$ for GaussianCopula on IEEE-CIS), confirming that
within-row feature conditioning does not substitute for across-row temporal modeling.
We documented generator-specific failure modes---TVAE minority-class collapse and its
resolution via conditional sampling, CTGAN high-dimensional scalability failure, and
the role of value protection and training column selection in TabularARGN's P3 result
---alongside the theoretical impossibility of cross-entity pattern preservation for
any row-independent generator.

These results establish a clear empirical and theoretical case: none of the four evaluated
generators is a suitable substitute for real fraud data in workflows that depend on the
P1--P4 behavioral patterns tested here.
The theoretical row-independence argument (Section~\ref{sec:paradox}) further suggests
this finding extends to any generator that produces rows independently, regardless of
architecture.
Improving behavioral fidelity will require architectural innovation---entity-aware
sequential generation for temporal patterns, cross-entity relational modeling for graph
patterns, and explicit velocity-rule calibration objectives---none of which is provided
by any generator evaluated in this benchmark.
We release our evaluation framework and benchmark data to enable the community to measure
progress against the behavioral fidelity standard.

\paragraph{Code and data availability.}
The full evaluation framework---including all behavioral metric implementations,
the degradation ratio computation pipeline, entity assignment logic, and figure
generation scripts---is released as open source at
\url{https://github.com/bhavana3/synthetic-data-experiments}.
The repository includes: (1) \texttt{evaluation/behavioral\_fidelity.py}---the
complete P1--P4 metric library; (2) \texttt{notebooks/}---Databricks training
notebooks for all four generators on both datasets; (3) \texttt{paper/figures/}---
reproducible figure generation scripts.
Both datasets used in this benchmark are publicly available on Kaggle
\cite{ieee_cis, amazon_fdb} without access restrictions.

\impact{%
Synthetic data is increasingly used as a privacy-preserving substitute for real financial
transaction records, enabling fraud detection research without sharing sensitive data.
This work demonstrates that current synthetic tabular generators fail to preserve the behavioral
patterns that fraud detection systems rely on, providing practitioners with a formal framework
to audit this risk before deploying synthetic data in production workflows.

The primary societal benefit is improved risk awareness: by quantifying behavioral fidelity
failures with a reproducible framework, we help researchers and practitioners avoid
deploying synthetic data in contexts where it would produce miscalibrated fraud detection
models---a failure mode with direct consumer harm.

We do not introduce new data collection, do not work with real individual transactions
beyond publicly available Kaggle datasets, and do not create or enable new attack capabilities.
The evaluation framework and datasets are released under open-source licenses.
The generalizability findings (Section~\ref{sec:generalizability}) highlight applications
in healthcare and other sensitive domains; in those contexts, practitioners should treat the
behavioral fidelity gap as additional motivation for careful validation rather than as guidance
to avoid synthetic data entirely.
}

\acks{%
The author thanks the IEEE Computational Intelligence Society and Amazon for releasing the
IEEE-CIS Fraud Detection and Amazon Fraud E-Commerce datasets as public benchmarks.
No external funding was received for this work.
The author declares no competing interests.
}

\bibliographystyle{plainnat}
\bibliography{references}

\appendix

\section{Generator Hyperparameters and Training Configuration}
\label{app:hyperparams}

\paragraph{CTGAN.}
Trained using SDV 1.35.0 with the following hyperparameters: 300 epochs,
batch size 500, embedding dimension 128, generator and discriminator
hidden dimensions $(256, 256)$, learning rate $2 \times 10^{-4}$, PacGAN
packing factor 10, mode-specific normalization enabled.
On IEEE-CIS, training was restricted to a 66K-row subsample (1:3
fraud-to-legitimate ratio) due to one-hot OOM at full scale; V-columns
were excluded by setting their SDV sdtype to \texttt{numerical} before
metadata finalization.
On Amazon FDB, \texttt{device\_id} and \texttt{ip\_address} were encoded
with \texttt{LabelEncoder(order\_by=None)} from RDT before fitting
(first-encounter integer codes; \texttt{order\_by=``frequency''} is not
supported in RDT\,$\leq$1.x).

\paragraph{TVAE.}
Trained using SDV 1.35.0 with 300 epochs, batch size 500, embedding
dimension 128, compress and decompress dimensions $(128, 128)$, learning
rate $1 \times 10^{-3}$.
On IEEE-CIS, conditional sampling (\texttt{sample\_from\_conditions}) was
used at generation time to preserve the target 3.5\% fraud rate; the
same V-column exclusion as CTGAN was applied.
On Amazon FDB, the same \texttt{LabelEncoder(order\_by=None)} strategy as CTGAN was used.

\paragraph{GaussianCopula.}
Trained using SDV 1.35.0's \texttt{GaussianCopulaSynthesizer} with
\texttt{default\_distribution=``beta''} (empirically better for bounded
financial features).
No column exclusions; all 48 IEEE-CIS behavioral columns and all 9 Amazon
FDB columns were included.

\paragraph{TabularARGN.}
Trained using MOSTLY AI SDK (local mode, model version 260) with
default hyperparameters: autoregressive transformer with 4 attention
heads, embedding dimension 128, 10 epochs.
On Amazon FDB, value protection was explicitly disabled via
\texttt{tabular\_model\_configuration: \{value\_protection: False\}};
the \texttt{\_RARE\_} fraction of 0.000 in the output confirms
deactivation.
No \texttt{columns} include-list was specified; all feature columns
were generated.


\section{Behavioral Metric Formulas}
\label{app:metrics}

For completeness, we restate all metric definitions in a single location.
Let $D_{\mathrm{real}}$ and $D_{\mathrm{syn}}^G$ denote the real training
split and synthetic dataset from generator $G$, and let $W_1(P, Q)$ denote
the Wasserstein-1 distance between empirical distributions.

\paragraph{P1: Inter-event time (IET) metrics.}
\begin{align}
  B_{1,W}(G) &= W_1\!\left(
      \mathrm{IETD}^F(D_{\mathrm{real}}),\;
      \mathrm{IETD}^F(D_{\mathrm{syn}}^G)\right), \\
  B_{1,\rho}(G) &= \left|\bar{\rho}(D_{\mathrm{real}}^F)
                       - \bar{\rho}(D_{\mathrm{syn},F}^G)\right|,
\end{align}
where $\bar{\rho}$ is the mean within-entity lag-1 autocorrelation of
the IET sequence, computed over all fraud entities with $\geq 3$
transactions.

\paragraph{P2: Burst structure metrics.}
\begin{align}
  B_{2,AL}(G)  &= W_1\!\left(
      \{L(u)\}_{u \in \mathcal{F}},\;
      \{L(u)\}_{u \in \tilde{\mathcal{F}}}\right), \\
  B_{2,BL}(G,\delta) &= W_1\!\left(
      \{L(b)\}_{b \in \bigcup_{u \in \mathcal{F}} \mathcal{B}_u(\delta)},\;
      \{L(b)\}_{b \in \bigcup_{u \in \tilde{\mathcal{F}}} \mathcal{B}_u(\delta)}\right),
\end{align}
where $L(u) = \max_i t_i^u - \min_i t_i^u$ is the entity active lifetime,
and $\mathcal{B}_u(\delta)$ are the burst blocks of entity $u$ under
inter-event gap threshold $\delta$ (evaluated at 60, 300, and 1800 seconds).

\paragraph{P3: Graph motif metrics.}
\begin{align}
  B_{3,\mathrm{FO}}(G) &= W_1\!\left(
      \{\mathrm{FO}(a)\}_{a \in A^{\mathrm{real}}},\;
      \{\mathrm{FO}(a)\}_{a \in A^{\mathrm{syn}}}\right), \\
  B_{3,\mathrm{CC}}(G) &= \left|\mathrm{CC}(\mathcal{G}_U^{\mathrm{real}})
                               - \mathrm{CC}(\mathcal{G}_U^{\mathrm{syn}})\right|, \\
  B_{3,\triangle}(G) &= \left|\log\!\left(
      \frac{|\triangle(\mathcal{G}_U^{\mathrm{real}})|+1}
           {|\triangle(\mathcal{G}_U^{\mathrm{syn}})|+1}\right)\right|.
\end{align}

\paragraph{P4: Velocity-rule trigger rate metric.}
\begin{equation}
  B_{4,\mathrm{VR}}(G) = \frac{1}{|\mathcal{R}|}
  \sum_{r \in \mathcal{R}}
  \left|\mathrm{TR}_r(D_{\mathrm{real}}) - \mathrm{TR}_r(D_{\mathrm{syn}}^G)\right|,
\end{equation}
where $\mathcal{R}$ is the canonical set of 8 velocity rules
(Table~\ref{tab:velocity_rules}) and $\mathrm{TR}_r(D)$ is the fraction
of fraud entities in $D$ that trigger rule $r$ at least once.

\paragraph{Degradation ratio.}
\begin{equation}
  \mathrm{DR}(G, m) = \frac{B_m(G)}{B_m(\mathrm{BASELINE})},
\end{equation}
where $B_m(\mathrm{BASELINE})$ is the metric value from evaluating against
a random 50/50 split of the real training data.
All baseline values are reported in Table~\ref{tab:baseline}.


\section{Entity Assignment for Synthetic Data}
\label{app:entity_assign}

Because synthetic data lacks persistent entity identifiers (card fingerprints,
user IDs), P1, P2, and P4 evaluation requires assigning pseudo-entity IDs
to synthetic rows.
We use the following deterministic procedure.

\noindent\textbf{Algorithm A1: Synthetic Entity Assignment}\\[4pt]
\textit{Input:} $D_{\mathrm{syn}}$, $D_{\mathrm{real}}$, entity column \textsc{EntityCol}, random seed.\\
\textit{Output:} $D_{\mathrm{syn}}$ with \textsc{EntityCol} assigned.
\begin{enumerate}[leftmargin=*,noitemsep,topsep=4pt]
  \item For each class $c \in \{0, 1\}$, compute the empirical entity-size distribution
        $\mathcal{S}^c = \{|\mathcal{T}_u|\}_{u \in \mathcal{C}^{\mathrm{real}}}$.
  \item Sort $D_{\mathrm{syn}}$ rows of class $c$ by \texttt{TransactionDT} (ascending).
  \item Draw entity sizes $s_1, s_2, \ldots$ i.i.d.\ from $\mathcal{S}^c$ with replacement
        (seed 42); assign group label \texttt{syn\_}$c$\texttt{\_}$k$ to the next $s_k$
        consecutive rows until all rows of class $c$ are assigned.
  \item Randomly permute within-class entity assignments (seed 42).
  \item Return $D_{\mathrm{syn}}$ with \textsc{EntityCol} populated.
\end{enumerate}

This procedure preserves the real entity-size distribution: the resulting
synthetic entities have a size distribution matching that of real entities,
with a mean of approximately $\bar{s} = |D_{\mathrm{syn}}| / N_e$ rows per entity.
For IEEE-CIS, the real entity-size distribution is highly right-skewed
(median~4, mean~$\approx$43, max~14{,}932), so $N_e \approx 11{,}000$ entities emerge
from the size-sampling procedure.
Because the assignment imposes the correct entity structure externally,
it represents a \emph{lower bound} on behavioral degradation:
real deployment would require entity IDs to be generated jointly,
which no tested generator supports.

\end{document}